\documentclass{article} 
\usepackage{iclr2026_conference,times}
\usepackage[utf8]{inputenc} 
\usepackage[T1]{fontenc}    
\usepackage{hyperref}       
\usepackage{url}            
\usepackage{booktabs}       
\usepackage{amsfonts}       
\usepackage{nicefrac}       
\usepackage{microtype}      
\usepackage[dvipsnames]{xcolor}
\usepackage{color, colortbl}
\usepackage{bm}
\usepackage{amsmath} 
\usepackage{mathtools}
\usepackage{enumitem}
\usepackage{amssymb}
\usepackage{wrapfig}
\usepackage{algorithm}
\usepackage{algpseudocode}
\usepackage{amsthm}
\DeclareMathOperator*{\argmin}{argmin}
\DeclareMathOperator*{\argmax}{argmax}

\usepackage{caption}
\usepackage{subcaption}
\usepackage[export]{adjustbox}
\usepackage{multirow}
\usepackage{dsfont}

\def\eg{\emph{e.g.}} 

\def\ie{\emph{i.e.}}

\definecolor{tabhighlight}{HTML}{e5e5e5}
\definecolor{white}{rgb}{1,1,1}
\definecolor{lightCyan}{rgb}{0.925,1,1}
\newcommand{\zbe}[1]{{\color{black}#1}}

\newcolumntype{b}{>{\columncolor{lightCyan}}c}

\newcommand{\ours}{\texttt{MR$^2$}}

\newcommand{\softmax}{\mathrm{softmax}}
\newcommand{\vx}{\mathbf{x}}
\newcommand{\vw}{\mathbf{w}}
\newcommand{\vz}{\mathbf{z}}
\newcommand{\vy}{\mathbf{y}}
\newcommand{\vu}{\mathbf{u}}
\newcommand{\vv}{\mathbf{v}}

\newcommand{\tableCellHeight}{1}
\newcommand{\tabstyle}[1]{
  \setlength{\tabcolsep}{#1}
  \renewcommand{\arraystretch}{\tableCellHeight}
  \centering
  \small
}

\newtheorem{definition}{Definition}

\newtheorem{lemma}{Lemma}
\newtheorem{proposition}{Proposition}
\newtheorem{corollary}{Corollary}

\newtheoremstyle{restatedlemma}
  {\topsep}       
  {\topsep}       
  {\itshape}      
  {}              
  {\bfseries}     
  {.}             
  {.5em}          
  {\thmname{#1} \thmnumber{#2} (\thmnote{#3})} 

\newtheoremstyle{restatedproposition}
  {\topsep}       
  {\topsep}       
  {\itshape}      
  {}              
  {\bfseries}     
  {.}             
  {.5em}          
  {\thmname{#1} \thmnumber{#2} (\thmnote{#3})} 

\newtheoremstyle{restatedcorollary}
  {\topsep}       
  {\topsep}       
  {\itshape}      
  {}              
  {\bfseries}     
  {.}             
  {.5em}          
  {\thmname{#1} \thmnumber{#2} (\thmnote{#3})} 

\theoremstyle{restatedcorollary}
\newtheorem*{restatedcorollary}{Restated Corollary}

\theoremstyle{restatedlemma}
\newtheorem*{restatedlemma}{Restated Lemma}

\theoremstyle{restatedproposition}
\newtheorem*{restatedproposition}{Restated Proposition}

\title{Reducing Class-Wise Performance Disparity via Margin Regularization}
\author{Beier Zhu$^1$, Kesen Zhao$^1$, Jiequan Cui$^2$, Qianru Sun$^3$, Yuan Zhou$^1$, Xun Yang$^4$, Hanwang Zhang$^1$
\\
$^1$Nanyang Technological University, $^2$Hefei University of Technology \\
$^3$Singapore Management University, $^4$University of Science and Technology of China  \\
\texttt{beier.zhu@ntu.edu.sg}
}

\iclrfinalcopy  

\begin{document}
\maketitle
\begin{abstract}
Deep neural networks often exhibit substantial disparities in class-wise accuracy, even when trained on class-balanced data—posing concerns for reliable deployment. While prior efforts have explored empirical remedies, a theoretical understanding of such performance disparities in classification remains limited. In this work, we present \texttt{M}argin \texttt{R}egularization for performance disparity  \texttt{R}eduction (\ours), a theoretically principled  regularization for classification by dynamically adjusting margins in both the logit and representation spaces. Our analysis establishes a margin-based, class-sensitive generalization bound that reveals how per-class feature variability contributes to error, motivating the use of larger margins for ``hard'' classes. Guided by this insight, \ours~optimizes per-class logit margins proportional to feature spread and penalizes excessive representation margins to enhance intra-class compactness.
Experiments on seven datasets—including ImageNet—and diverse pre-trained backbones (MAE, MoCov2, CLIP) demonstrate that our \ours~not only improves overall accuracy but also significantly boosts ``hard'' class performance without trading off ``easy'' classes, thus reducing the performance disparity. Codes are available in \url{https://github.com/BeierZhu/MR2}.

\end{abstract}
\section{Introduction}

Deep neural networks have achieved strong generalization across a wide range of domains and tasks. However, as noted by~\cite{Cui_2024_CVPR}, substantial disparity in class-wise accuracy persist—even when training on class-balanced data. As illustrated in Figure~\ref{fig:teaser}(a), models such as ResNet-50~\citep{he2016deep}, and ViT-B/16~\citep{dosovitskiy2021an} trained on ImageNet~\citep{deng2009imagenet} exhibit extreme class-wise accuracy imbalance: for instance, with ResNet-50, the top-performing class reaches 100\% top-1 accuracy, while the worst performs at only 16\%. The under-performance of certain categories threatens the safe deployment of deep models, making it imperative to design frameworks that combine high accuracy with balanced class-wise performance. 

Several works have investigated the causes and remedies for class-wise performance disparity. For instance, \cite{balestriero2022effects} shows that common techniques such as data augmentation and weight decay are unfair across classes and can lead to severe performance degradation on certain classes. \cite{Cui_2024_CVPR} attributes the disparity to the problematic representation rather than classifier bias—unlike in long-tailed classification~\citep{menon2021longtail,ren2020balanced,Kang2020Decoupling}—and advocates appropriate data augmentation~\citep{yun2019cutmix,8953317} and representation learning (\eg, \cite{he2020momentum, he2022masked}) to mitigate class-wise unfairness. However, these approaches remain largely empirical and lack theoretical grounding. In this work, we provide a principled explanation and framework for reducing class-wise performance disparity.

Our theoretical analysis is both motivated by and consistent with the empirical finding by \cite{Cui_2024_CVPR}, which reveals that ``hard'' (under-performing) classes exhibit greater feature variability than ``easy'' (well-performing) ones. This is reflected in Figure~\ref{fig:teaser}(b), where the $L_2$ norm of the per-class feature variance, increases from ``easy'' to ``hard'' classes. In this work, we derive a novel margin-based, class-sensitive learning guarantee (Proposition~\ref{propo:key-result}) that connects per-class feature variability and margin design. 
The result reveals that classes with larger feature variability contribute more to the generalization error. 
Guided by our theoretical results, we propose \texttt{M}argin \texttt{R}egularization for performance disparity \texttt{R}eduction, dubbed as \ours, a framework that dynamically adjusts margins in both logit and representation spaces during training (Section~\ref{sec:method}).
The logit margin is designed proportional to the feature spread to reduce the generalization bound (Corollary~\ref{corollary:optimal-gamma}), while the representation margin regularization further tightens this bound (Section~\ref{subsec:margin-loss}).
Intuitively, the logit margin loss naturally promotes larger margins for ``hard'' classes, which are known to improve generalization~\citep{bartlett1998boosting,koltchinskii2002empirical}, and the representation margin loss encourages intra-class feature compactness.

To validate the effectiveness of our \ours, we conduct extensive experiments on seven datasets including ImageNet~\citep{deng2009imagenet} using  
both convolutional networks~\citep{he2016deep,he2016identity} and Vision Transformer~\citep{dosovitskiy2021an} architectures.  In addition, we evaluate \ours~on diverse pre-trained backbones with both end-to-end fine-tuning and linear probing paradigms—including MAE~\citep{he2022masked}, MoCov2~\citep{he2020momentum}, and CLIP~\citep{radford2021learning}—to assess its broad applicability on recent foundation models. We make the following  observations: (1) \textbf{reduced performance disparity}: our \ours~achieves clear improvements on ``hard'' classes across all models and datasets, effectively narrowing the performance gap with ``easy'' classes; and (2) \textbf{improved overall-performance}: our \ours~attains moderate gains on ``easy'' classes as well, indicating that \ours~reduces disparity without sacrificing ``easy'' class performance. This differs markedly from reweighting methods in debiasing~\citep{nam2020learning,liu2021just}, margin-based methods~\citep{cao2019learning,ren2020balanced}, distributionally robust optimization~\citep{Sagawa2020Distributionally,jung2023reweighting}, etc., which entail performance trade-offs across ``hard'' and ``easy'' groups (Section~\ref{sec:compareSoTA}).

\begin{figure}[t]
\centering
\includegraphics[width=0.95\textwidth]{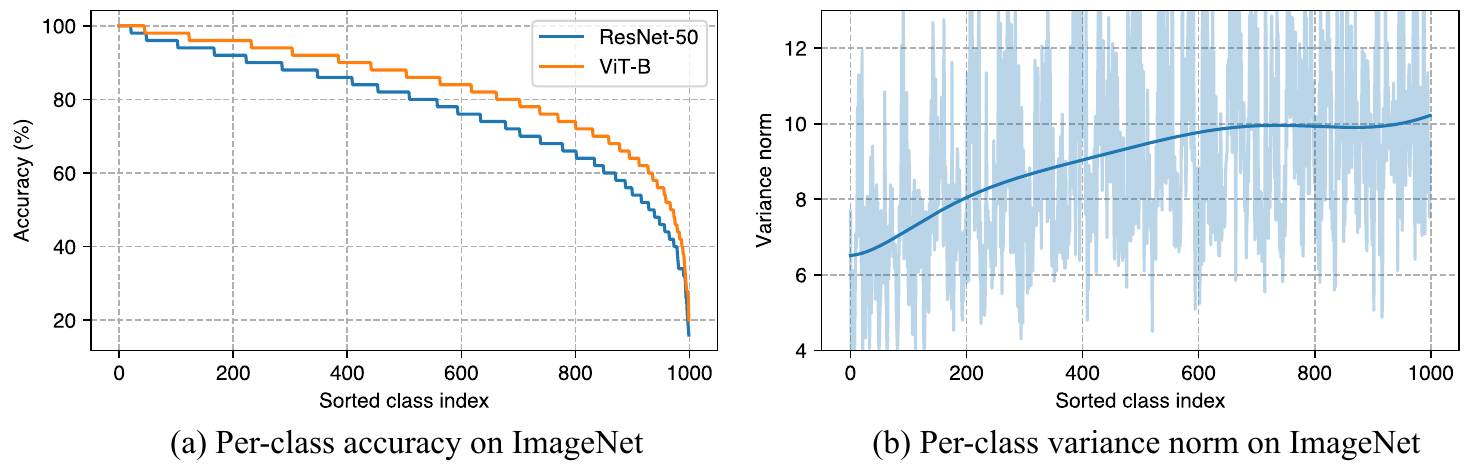}
\caption{(a) Deep models exhibit severe class-wise accuracy disparity on class-balanced dataset such as ImageNet. (b) Feature diversity imbalance: feature distribution of ``hard'' classes is more diverse than that of ``easy'' classes. Classes are sorted in descending order of per-class accuracy. }
\label{fig:teaser}
\vspace{-3mm}
\end{figure}

\section{Related Work}

\noindent\textbf{Margin-based long-tail learning.} In long-tailed classification, where the training data exhibit a skewed label distribution, introducing asymmetric margins into the loss function has proven to be an effective strategy. Typical methods include LDAM~\citep{cao2019learning}, EQL~\citep{tan2020equalization}, Balanced Softmax~\citep{ren2020balanced}, Logit Adjustment~\citep{menon2021longtail}, GLA~\citep{zhu2023generalized}, PPA~\citep{11093031} and DRO-LT~\citep{samuel2021distributional}. However, these losses are specifically designed for class-imbalanced settings and reduce to the standard cross-entropy loss when class priors are uniform. In contrast, our margin-based loss provides meaningful regularization for improving balanced per-class performance under balanced class priors. 
\zbe{Hyperspherical learning methods~\citep{zhou2022learning,son2025difficulty} regulate class or sample margins, often adapting them to data difficulty and class frequency. However, these approaches do not address class-wise performance disparities, which is the central focus of our work.
}
A detailed discussion is deferred to Section~\ref{sec:compare}. 

\noindent\textbf{Mitigating performance disparity in classification.}  Recently, several studies~\citep{Cui_2024_CVPR,balestriero2022effects,li2023wat,ma2022tradeoff,jin2025enhancing} have explored ways to address the imbalance accuracies across classes on balanced data. Such work primarily focuses on fairness under adversarial attacks~\citep{li2023wat,ma2022tradeoff,jin2025enhancing} or conducts purely empirical analyses of image recognition task~\citep{Cui_2024_CVPR,balestriero2022effects}.  In contrast, we provide a general class-sensitive analysis that links class-level unfairness to imbalanced intra-class diversity. Building on this, we derive principled margin regularization at both the logit and feature levels to promote balanced class-wise generalization.

\section{Method}\label{sec:method}


\noindent\textbf{Setup.} Considering a classification problem with inputs $\vx \in \mathcal{X}$ and labels $y \in \mathcal{Y}=[K]$ drawn from an unknown distribution $\mathbb{P}$ with balanced class priors, \ie, $\mathbb{P}(y)={1}/{K}$.  Let $\mathcal{F}$ be a hypothesis set of functions mapping from $\mathcal{X}\times \mathcal{Y}$ to the prediction score (logit) $\mathbb{R}$. For a hypothesis $f\in \mathcal{F}$, the predicted label $\mathsf{f}(\vx)$ of $\vx$ is assigned to the one with the highest score, \ie, $\mathsf{f}(\vx)=\argmax_{y\in \mathcal{Y}}f(\vx,y)$. Let $\mathds{1}(\cdot)$ be the indicator function, we define the risk of $f$ as the 0-1 loss over $\mathbb{P}$:
    $\mathcal{R}(f)={\mathbb{E}_{\vx,y \sim \mathbb{P}}}[\mathds{1}(y\neq \mathsf{f}(\vx))]$
and the best-in-class risk as $\mathcal{R}^*(\mathcal{F})=\inf_{f\in\mathcal{F}}\mathcal{R}(f)$. 
Our goal is to learn a function $f^*$ that attains the best-in-class risk $\mathcal{R}^*$. However, as the distribution $\mathbb{P}$ is unknown, the risk of a hypothesis is not directly accessible. The learner instead measures the empirical risk given a training dataset $\mathcal{D}=\{(\vx_i,y_i)\}_{i=1}^{N}\sim \mathbb{P}^N$:
$\widehat{\mathcal{R}}_{\mathcal{D}}(f)=\frac{1}{N}\sum_{\vx,y\in \mathcal{D}}[\mathds{1}(y\neq \mathsf{f}(\vx))].$

\noindent\textbf{Notations.}
Assume that the function $f$ consists of a feature encoder $\phi: \mathcal{X} \rightarrow \mathbb{R}^d$ and a classification head parameterized by $K$ weight vectors $\{\vw_{k}\}_{k=1}^K$. 
For input $\vx$, the feature representation is $\phi(\vx)$ and the logit for class $y$ is represented as: $f(\vx,y)=\vw_y^\top\phi(\vx)$.
Let $\mathcal{D}_y$ denote the subset that contains all instances $\vx$ belonging to the class label $y$ and $N_y=N/K$ the sample size of class $y$. The empirical per-class mean is computed as $\hat{\bm{\mu}}_k=\frac{1}{N_y}\sum_{\vx \in \mathcal{D}_y}[\phi(\vx)]$ and the mean squared deviation is computed as $\|\hat{\mathbf{s}}_k\|_2^2=\frac{1}{N_y}\sum_{\vx\in\mathcal{D}_y}[\|\phi(\vx)-\hat{\bm{\mu}}_k\|_2^2]$.
During training, we maintain exponential moving average (EMA) of the squared norm of mean $\{\|\hat{\bm{\mu}}_k\|_2^2\}_{k=1}^K$ and mean squared deviation $\{\|\hat{\mathbf{s}}_k\|_2^2\}_{k=1}^K$. We now present~\ours, a framework that adjusts margins in both logit and representation space. 

\noindent\textbf{Logit margin.} Let $\mathbf{1}_y$ denote the one-hot vector for class $y$, and $\vz=[f(\vx,1),...,f(\vx,K)]^\top$ corresponding logit vector. The $\bm{\gamma}$-margin cross-entropy loss $\ell_{\bm{\gamma},\mathsf{ce}}:\mathcal{F}\times \mathcal{X} \times \mathcal{Y} \rightarrow \mathbb{R}^+$ for input $\vx$ with label $y$ is:
\begin{equation}\label{eq:margin-logit-loss}
    \ell_{\bm{\gamma}, \mathsf{ce}}(f,\vx,y)=-\mathbf{1}_y^\top\ln[\softmax(\vz/\gamma_y)],\ \text{where}\ \gamma_y=\frac{\bar{c}\cdot K(\|\hat{\bm{\mu}}_y\|_2^2+\|\hat{\mathbf{s}}_y\|_2^2)^{\frac{1}{3}}}{\sum_{k=1}^{K}(\|\hat{\bm{\mu}}_k\|_2^2+\|\hat{\mathbf{s}}_k\|_2^2)^{\frac{1}{3}}}, 
\end{equation}
where $\bar{c}>0$ is a hyper-parameter and $\bm{\gamma}=[\gamma_1,...,\gamma_K]^\top$ specifies per-class margins, normalized such that the mean $\bar{\gamma}=\frac{1}{K}\sum_{k}\gamma_k=\bar{c}$. Note that when $\gamma_k=1$ for all $k\in[K]$, the $\bm{\gamma}$-margin cross-entropy loss reduces to the canonical multi-class cross-entropy loss. 

At a high level, hard classes (\ie, those with low accuracy) typically exhibit greater feature variability, as observed in both our analysis (Figure~\ref{fig:teaser}(b)) and \cite{Cui_2024_CVPR}.
Intuitively, our loss adaptively assigns larger margins to such classes, leveraging the well-established fact that larger margins lead to better generalization~\citep{bartlett1998boosting}. Therefore the proposed loss \textit{help improve the performance on hard classes.} 
The per-class margin $\gamma_k$ in Eq.~(\ref{eq:margin-logit-loss}) is formally justified in Corollary~\ref{corollary:optimal-gamma}, where we show that it \textit{also minimizes the generalization risk bound.} (The proof is deferred for clarity of exposition.)

\noindent\textbf{Representation margin.} Let $\bar{s}=\frac{1}{K}\sum_{k}\|\hat{\mathbf{s}}_k\|_2^2$ denote the average of the mean squared deviation. 
The representation margin loss $\ell_{\bar{s}}:\mathcal{F}\times \mathcal{X} \times \mathcal{Y} \rightarrow \mathbb{R}^+$ is:
\begin{equation}\label{eq:soft-feature-margin}
    \ell_{\bar{s}}(f, \vx, y)= \ln\left[1+\sum_{\vx^+\in \mathcal{D}_y\backslash \{\vx\}}\exp(\|\phi(\vx)-\phi(\vx^+)\|_2^2-2\bar{s})\right].
\end{equation}
Intuitively, $\ell_{\bar{s}}$ encourages intra-class compactness with the parameter $2\bar{s}$ serving as the margin to control the slack. The loss $\ell_{\bar{s}}$ can be regarded as a differentiable relaxation of $\hat{\ell}_{\bar{s}}$ (proof in Section~\ref{sec:proof-relation}): 
\begin{equation}\label{eq:hard-feature-margin}
    \hat{\ell}_{\bar{s}}(f,\vx,y)=\max\left[0, \max_{\vx^+\in \mathcal{D}_y\backslash \{\vx\}}\left(\|\phi(\vx)-\phi(\vx^+)\|_2^2-2\bar{s}\right)\right],
\end{equation}
which encourages the pair-wise distance within the same class no more than $2\bar{s}$. Observe that reducing the pairwise distance is equivalent to minimizing the mean squared deviation, as
 $\mathbb{E}[\|\phi(\vx^+)-\phi(\vx)\|_2^2]=2\|\mathbf{s}_k\|_2^2$ (proof in Section~\ref{sec:relation-pair-dis}). This also explains the origin of the factor $2$ that appears in front of $\bar{s}$. In Section~\ref{subsec:margin-loss}, our theoretical results detail that penalizing large mean squared deviation helps tighten the generalization error bound. 
 The overall objective of our \ours~is:
 \begin{equation}\label{eq:overall-objective}
     \min_{f \in \mathcal{F}}\frac{1}{N}\sum_{\vx,y \in \mathcal{D}}[\ell_{\bm{\gamma},\mathsf{ce}}(f,\vx,y)+\lambda\cdot\ell_{\bar{s}}(f,\vx,y)],
 \end{equation}
where $\lambda > 0$ is a hyper-parameter that controls the strength of representation regularization.

\subsection{Discussion with Existing Work}\label{sec:compare}
This section compares our two losses ($\ell_{\bm{\gamma},\mathsf{ce}}$ and $\ell_{\bar{s}}$) with prior methods that modify margins at both the logit and representation levels, respectively. 

\noindent\textbf{Logit level.}  To facilitate the comparison, we first rewrite our $\ell_{\bm{\gamma},\mathsf{ce}}$ loss in the form:
\begin{equation}\label{eq:logit-loss-rewrite}
    \ell_{\bm{\gamma},\mathsf{ce}}(f,\vx,y)=\ln\left[1+\sum_{y'\neq y}\exp\{(\vz_{y'}-\vz_y)/\gamma_y\}\right], \text{where}\ \gamma_y=\frac{\bar{c}\cdot K(\|\hat{\bm{\mu}}_y\|_2^2+\|\hat{\mathbf{s}}_y\|_2^2)^{\frac{1}{3}}}{\sum_{k=1}^{K}(\|\hat{\bm{\mu}}_k\|_2^2+\|\hat{\mathbf{s}}_k\|_2^2)^{\frac{1}{3}}}
\end{equation}
Next, we contrast our  $\ell_{\bm{\gamma},\mathsf{ce}}$ with variants of cross‑entropy that add logit margins in the unified form:
\begin{equation}
    \ell_{\Delta,\mathsf{ce}}(f,\vx,y)=\ln\left[1+\sum_{y'\neq y} \exp\{\Delta_{yy'}+\vz_{y'}-\vz_y\}\right],
\end{equation}
where $\Delta_{yy'}$ adds per-class margin into the cross-entropy loss. Typical choices for $\Delta$ include: (1) LDAM~\cite{cao2019learning}: $\Delta_{yy'}=\mathbb{P}(y)^{-\frac{1}{4}}$ (2) EQL~\cite{tan2020equalization}: $\Delta_{yy'}=\mathbb{P}(y')$, (3) Balanced Softmax~\cite{ren2020balanced}: $\Delta_{yy'}=\ln \frac{\mathbb{P}(y')}{\mathbb{P}(y)}$, and (4) Logit Adjustment~\cite{menon2021longtail}: $\Delta_{yy'}=\tau\cdot \ln \frac{\mathbb{P}(y')}{\mathbb{P}(y)}$, with $\tau>0$. 
Existing logit‑margin losses are tailored to class‑imbalance: their margins $\Delta_{yy'}$ are functions of skewed class priors. In contrast, our loss $\ell_{\bm{\gamma},\mathsf{ce}}$ is derived from a non‑asymptotic, class‑sensitive bound (Section~\ref{subsec:margin-loss}) and aims to improve  balanced accuracy under balanced priors. Note that when the priors are equal—$\Delta_{yy'}$ becomes a constant—existing $\ell_{\Delta,\mathsf{ce}}$ collapses to cross‑entropy with \textit{equal margins}, whereas our $\ell_{\bm{\gamma},\mathsf{ce}}$ still imposes class-sensitive margins.

\noindent\textbf{Representation level.}  The most commonly techniques used to regularize the feature representation is the contrastive objectives which take the form of
\begin{equation}
    \ell_{\mathsf{con}}(f,\vx,y)=\underset{\vx^{+}\in \mathcal{P}(\vx)}{\mathbb{E}}\ln\left[1+\underset{\vx^{-}\in \mathcal{N}(\vx)}{\sum}\exp\{\phi(\vx)^\top\phi(\vx^{-})-\phi(\vx)^\top\phi(\vx^{+})\}\right]
\end{equation}
In the unsupervised setting~\cite{he2020momentum,chen2020simple}, positive samples $\vx^{+} \in \mathcal{P}(\vx)$ are random augmentations of $\vx$, while negative samples $\vx^{-} \in \mathcal{N}(\vx)$ are randomly drawn from the dataset. In the supervised setting~\cite{khosla2020supervised,radford2021learning}, positives share the same label as $\vx$, and negatives come from different classes. However, these methods do not incorporate margin constraints into the objective. Some methods incorporate \textit{fixed margins} into contrastive objectives, either tuned on a validation set~\cite{sohn2016improved,jitkrittum2022elm,barbano2023unbiased} or derived from class priors for imbalanced learning~\cite{samuel2021distributional}. In contrast, our loss $\ell_{\bar s}$ adaptively sets the margin based on the average mean squared deviation based on our analysis of our generalization bound (Proposition~\ref{propo:key-result}).

\zbe{
\noindent\textbf{Discussion with Neural Collapse.}
Neural Collapse \citep{papyan2020prevalence,behnia2023implicit,kini2024symmetric,galanti2021role,ma2025neural,fang2021exploring} characterizes idealized geometric patterns that emerge in the terminal phase of training, including the vanishing of within-class feature variance, \ie, $\|\hat{\mathbf{s}}_y\|\rightarrow 0, \forall y \in \mathcal{Y}$. However, on large-scale and hard datasets such as ImageNet, this variability collapse does not  materialize. Consistent with recent empirical studies, we observe that within-class feature variations remain substantial and class-dependent rather than all collapsing toward zero. These deviations motivate our modeling of non-negligible feature spreads. Detailed analysis and empirical evidence are provided in Sec.\ref{appsec:discuss}.

\noindent\textbf{Discussion with temperature scaling.}
Our logit-level loss in Eq.~(\ref{eq:logit-loss-rewrite}) can also be viewed as introducing a \emph{class-dependent temperature}, similar to temperature scaling used in knowledge distillation and model calibration~\citep{xu2020feature,guo2017calibration}, where a (typically global) temperature is applied to smooth or calibrate prediction confidences. 
By contrast, our $\bm{\gamma}$ is \emph{spread-aware} and is explicitly designed to balance margins between easy and hard classes, rather than improving calibration quality.

}

\section{Theoretical Analysis}\label{sec:theory}
In this section, we explain why \ours~reduces classification disparity by showing that margin regularization on the logits and representations balances the per-class error and tightens the upper bound of generalization risk.
We start with some preliminaries on $\bm{\gamma}$-margin loss and Rademacher complexity. 

\subsection{Preliminaries}
Recall that the margin of a function $f\in \mathcal{F}$ on a data point $(\vx,y)$ is defined as:
\begin{equation}
    \gamma_f(\vx, y)=f(\vx,y)-\max_{y'\neq y}f(\vx, y'),
\end{equation}
which measures the difference between the score assigned to the true and that of the runner-up. 
$\bm{\gamma}$-margin loss~\cite{pmlr-v151-tian22a} is an extension of ramp loss which is defined as follows:
\begin{definition}($\bm{\gamma}$-\textbf{margin loss}) Let $\Phi_\gamma(u)=\min(1,\max(0,1-u/\gamma))$. For any $\bm{\gamma}=[\gamma_1,...,\gamma_K] > \mathbf{0}$, the multi-class $\bm{\gamma}$-margin loss $\ell_{\bm{\gamma}}:\mathcal{F}\times \mathcal{X} \times \mathcal{Y} \rightarrow [0,1]$ is:
\begin{equation}
    \ell_{\bm{\gamma}}(f,\vx,y)=\Phi_{\gamma_y}(\gamma_f(\vx,y))
\end{equation}
\end{definition}
\begin{figure}
\centering
\begin{minipage}{.45\textwidth}
  \centering
  \includegraphics[width=0.85\linewidth]{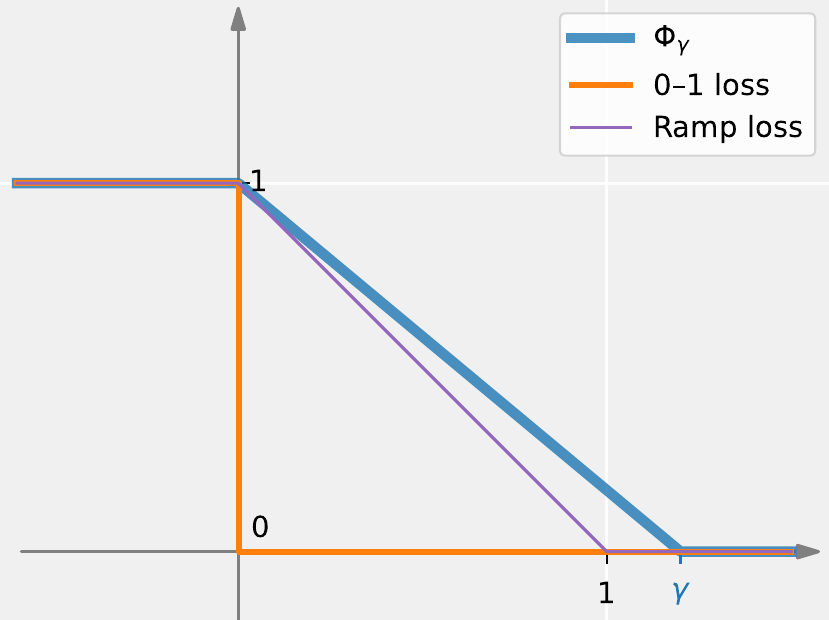}
  \captionof{figure}{Plot of 0-1 loss, ramp loss and $\Phi_\gamma$.}
  \label{fig:loss-compare}
\end{minipage}%
\hfill
\begin{minipage}{.45\textwidth}
  \centering
  \includegraphics[width=.85\linewidth]{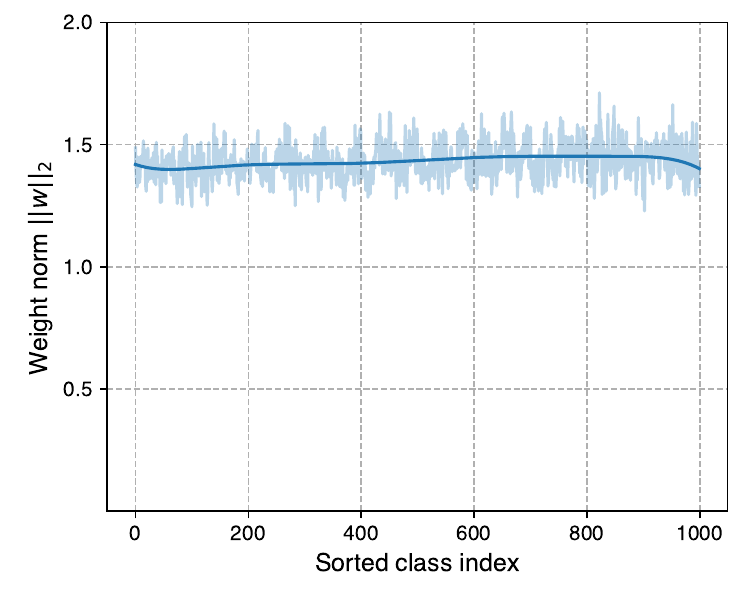}
  \captionof{figure}{Weight-norm $\|\vw_k\|_2$ on ImageNet.}
  \label{fig:norm-of-classifier}
\end{minipage}
\vspace{-4mm}
\end{figure}
Figure~\ref{fig:loss-compare} provides a comparison between $\Phi_\gamma$, the 0-1 loss, and ramp loss to better illustrate their differences. Note that the 0-1 loss is upper bounded by ramp loss and $\Phi_\gamma$. 
The empirical risk of $\bm{\gamma}$-margin loss is defined as $\widehat{\mathcal{R}}_\mathcal{D}^{\bm{\gamma}}(f)=\frac{1}{N}\sum_{\vx,y\in\mathcal{D}}[\ell_{\bm{\gamma}}(f,\vx,y)]$ and the expected risk is ${\mathcal{R}}^{\bm{\gamma}}(f)={\mathbb{E}}_{\vx,y}[\ell_{\bm{\gamma}}(f,\vx,y)]$. Next, we give the definition of empirical $\bm{\gamma}$-margin Rademacher complexity.
\begin{definition} (\textbf{Empirical} $\bm{\gamma}$\textbf{-margin Rademacher complexity}) Let $I_k$ denote the indices of the instances that belong to class $k$.
   Given non-negative margins $\bm{\gamma}=[\gamma_k]_{k\in[K]}$, the empirical  $\bm{\gamma}$-margin Rademacher complexity of $\mathcal{F}$ for a sampled dataset $\mathcal{D}$ is defined as:
   \begin{equation}
    \widehat{\mathfrak{R}}_\mathcal{D}^{\bm{\gamma}}(\mathcal{F})=\frac{1}{N}  \underset{\bm{\epsilon}}{\mathbb{E}} \left[\sup_{f\in \mathcal{F}}\left\{\sum_{k=1}^K \sum_{i\in I_k}\sum_{y\in\mathcal{Y}}\epsilon_{iy}\frac{f(\vx_i,y)}{\gamma_k}\right\} \right],
   \end{equation}
where $\bm{\epsilon}=(\epsilon_{iy})_{i,y}$ with 
$\epsilon_{iy}$ being i.i.d. Rademacher variables 
sampled uniformly from $\{-1, +1\}$. 
\end{definition}

\subsection{Margin Regularization Promotes Generalization and Reduces Disparity}\label{subsec:margin-loss}
In the sequel, we show that the margin regularization introduced in Eq.(\ref{eq:margin-logit-loss}) and Eq.(\ref{eq:soft-feature-margin}) improves both generalization and balanced accuracy. Our key result relies on the following lemma, which provides a bound on the expected risk in terms of the empirical $\bm{\gamma}$-margin risk and its Rademacher complexity.

\begin{lemma}\label{lemma:risk-bound}
(\textbf{$\bm{\gamma}$-margin bound}). Let $\mathcal{F}$ be a set of real valued functions. Given $\bm{\gamma} > \mathbf{0}$, then for any $\delta>0$, with probability at least $1-\delta$, for all $f\in \mathcal{F}$:
\begin{equation}\label{eq:risk-bound}
    \mathcal{R}(f) \leq \widehat{\mathcal{R}}_{\mathcal{D}}^{\bm{\gamma}}(f)+4\sqrt{2K} \widehat{\mathfrak{R}}_\mathcal{D}^{\bm{\gamma}}(\mathcal{F})+3\sqrt{\frac{\ln\frac{2}{\delta}}{2N}}
\end{equation}
\end{lemma}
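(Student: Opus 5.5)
The plan is to follow the standard multi-class margin-bound argument (Koltchinskii--Panchenko / Mohri et al.), adapted to class-dependent margins as in \cite{pmlr-v151-tian22a}. First, since $\Phi_{\gamma_y}(u)\ge \mathds{1}(u\le 0)$ and a misclassification at $(\vx,y)$ implies $\gamma_f(\vx,y)\le 0$, we have $\mathcal{R}(f)\le \mathcal{R}^{\bm{\gamma}}(f)$. It therefore suffices to bound $\mathcal{R}^{\bm{\gamma}}(f)-\widehat{\mathcal{R}}^{\bm{\gamma}}_{\mathcal{D}}(f)$ uniformly over $\mathcal{F}$ for the $[0,1]$-valued loss class $\mathcal{G}=\{(\vx,y)\mapsto \Phi_{\gamma_y}(\gamma_f(\vx,y)) : f\in\mathcal{F}\}$.

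Second, we apply the standard Rademacher uniform-convergence theorem to $\mathcal{G}$. This uses McDiarmid's inequality twice: once for the supremum deviation and once to pass from the expected to the empirical Rademacher complexity, with a union bound at $\delta/2$ each. With probability at least $1-\delta$ this gives
\[
\mathcal{R}^{\bm{\gamma}}(f)\le \widehat{\mathcal{R}}^{\bm{\gamma}}_{\mathcal{D}}(f)+2\widehat{\mathfrak{R}}_{\mathcal{D}}(\mathcal{G})+3\sqrt{\ln(2/\delta)/(2N)},
\]
which accounts for the constant $3$ in the last term of Eq.~(\ref{eq:risk-bound}). Here $\widehat{\mathfrak{R}}_{\mathcal{D}}(\mathcal{G})=\frac1N\mathbb{E}_{\bm\sigma}\sup_f\sum_i\sigma_i\Phi_{\gamma_{y_i}}(\gamma_f(\vx_i,y_i))$.

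Third, we reduce $\widehat{\mathfrak{R}}_{\mathcal{D}}(\mathcal{G})$ to the $\bm{\gamma}$-margin complexity. The key observation is that $\Phi_{\gamma_y}(u)=\Phi_1(u/\gamma_y)$, and $\Phi_1$ is $1$-Lipschitz. We therefore rewrite each summand as $\Phi_1\big(h_f(\vx_i)\big)$, where
\[
h_f(\vx_i)=\gamma_f(\vx_i,y_i)/\gamma_{y_i}=\tfrac{f(\vx_i,y_i)}{\gamma_{y_i}}-\max_{y'\ne y_i}\tfrac{f(\vx_i,y')}{\gamma_{y_i}}.
\]
In other words, every score of sample $i$ is divided by the class-dependent constant $\gamma_k$ with $k=y_i$, which matches the inner term $f(\vx_i,y)/\gamma_k$ in the definition of $\widehat{\mathfrak{R}}^{\bm{\gamma}}_{\mathcal{D}}$. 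The scalar Talagrand contraction is applied per coordinate $i$ and is valid with a per-sample Lipschitz map. The resulting map $(f(\vx_i,y)/\gamma_{y_i})_{y}\mapsto h_f(\vx_i)$ is then $\sqrt{2}$-Lipschitz in $\ell_2$ as a function of the $K$-vector of scaled scores, since it is a difference of a coordinate and a max of coordinates.

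Finally, we use Maurer's vector-contraction inequality, $\mathbb{E}\sup\sum_i\sigma_i\psi_i(v_i)\le\sqrt2 L\,\mathbb{E}\sup\sum_{i,y}\epsilon_{iy}v_{iy}$, together with the $\sqrt{2}$-Lipschitz bound above. This gives a factor $2$ (and possibly $\sqrt{K}$ depending on how the max is handled). Alternatively, we can follow the Mohri et al.\ route of bounding the max term by a sum over $y$, which produces the $\sqrt{K}$-type factor. Combining with the factor $2$ from step two yields $4\sqrt{2K}\,\widehat{\mathfrak{R}}^{\bm{\gamma}}_{\mathcal{D}}(\mathcal{F})$. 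I expect this last step to be the main obstacle: the difficulty is bookkeeping the Lipschitz constant of the margin-of-scaled-scores map and confirming that the per-sample scaling by $1/\gamma_{y_i}$ passes through contraction intact. If the constant comes out smaller, the stated bound still holds a fortiori, so I would aim for any constant $\le 4\sqrt{2K}$.
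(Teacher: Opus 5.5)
Your proposal is correct and follows essentially the paper's route. Both arguments apply Theorem 3.3 of Mohri et al.\ to the $[0,1]$-valued loss class $\mathcal{G}$, contract through the ramp using $\Phi_{\gamma_{y_i}}(u)=\Phi_1(u/\gamma_{y_i})$, and then apply the vector-contraction lemma to the scores rescaled by $1/\gamma_{y_i}$. The only difference is the Lipschitz bookkeeping: the paper bounds the margin map's constant loosely by $2\sqrt{K}$ (via $\max\le\sum\le\sqrt{K}\|\cdot\|_2$), whereas your $\sqrt{2}$ is sharp and gives the constant $4$ rather than $4\sqrt{2K}$. The stated bound then follows a fortiori, because $\widehat{\mathfrak{R}}^{\bm{\gamma}}_{\mathcal{D}}(\mathcal{F})\ge 0$ (since $\mathbb{E}_{\bm\epsilon}\sup_f(\cdot)\ge\sup_f\mathbb{E}_{\bm\epsilon}(\cdot)=0$).
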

The proof is provided in Section~\ref{sec:proof-lemma-margin}. This learning guarantee is a multi-class extension of Theorem 5.8 in \cite{10.5555/2371238}, and is similar to that of~\cite{cortes2025balancing}. 
Furthermore, following the techniques in Theorem 5.9 of \cite{10.5555/2371238}, Lemma~\ref{lemma:risk-bound} can be generalized to hold uniformly over all $\bm{\gamma}$, with only a modest overhead of  $\ln$-$\ln$ terms. Next, we aim to \textbf{(1)} bound the empirical $\bm{\gamma}$-margin Rademacher complexity $\widehat{\mathfrak{R}}_{\mathcal{D}}^{\bm{\gamma}}(\mathcal{F})$  in Lemma~\ref{lemma:rademacher}, and \textbf{(2)} bound the empirical risk of $\bm{\gamma}$-margin loss $\widehat{\mathcal{R}}_\mathcal{D}^{\bm{\gamma}}$ by its cross-entropy surrogate $\widehat{\mathcal{R}}_\mathcal{D}^{\bm{\gamma},\mathsf{ce}}$ in Lemma~\ref{lemma:risk-surrogate}.  
\begin{lemma}\label{lemma:rademacher}(\textbf{Bound on the empirical Rademacher complexity}.) For non-negative $\bm{\gamma} > \bm{0}$, consider $\mathcal{F}=\{(\vx,y) \mapsto \vw_y^\top \phi(\vx)\mid \vw_y \in \mathbb{R}^d, \|\vw_y\|_2 \leq \Lambda\}$. Let $\{\|\hat{\bm{\mu}}_k\|_2^2\}_{k=1}^K$ and $\{\|\hat{\mathbf{s}}_k\|_2^2\}_{k=1}^K$ denote the per-class squared norms of the feature means and the mean squared deviations, respectively, as defined in Section~\ref{sec:method}. Then, the following bound holds for all $f\in \mathcal{F}$:
\begin{equation}
\widehat{\mathfrak{R}}^{\bm{\gamma}}_\mathcal{D}(\mathcal{F})\leq  \Lambda\sqrt{\frac{K}{N}}  \sqrt{\sum_{k=1}^K\frac{\|\hat{\bm{\mu}}_k\|_2^2+\|\hat{\mathbf{s}}_k\|_2^2}{\gamma_k^2}} 
\end{equation}\label{eq:bound-rademacher}
\end{lemma}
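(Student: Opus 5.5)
We bound the empirical $\bm{\gamma}$-margin Rademacher complexity by the standard linear-class argument: Cauchy--Schwarz, then Jensen, then evaluate the expected squared norm of a Rademacher sum. The one new ingredient is regrouping by class at the end, so that the mean and spread of each class appear.

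\textbf{Step 1 (decouple the classes).} Write $f(\vx_i,y)=\vw_y^\top\phi(\vx_i)$ and exchange the sums so that each weight vector appears once:
\[
\sum_{k=1}^K\sum_{i\in I_k}\sum_{y\in\mathcal{Y}}\epsilon_{iy}\frac{\vw_y^\top\phi(\vx_i)}{\gamma_k}=\sum_{y\in\mathcal{Y}}\vw_y^\top\Big(\sum_{k=1}^K\sum_{i\in I_k}\frac{\epsilon_{iy}}{\gamma_k}\phi(\vx_i)\Big).
\]
The supremum over the constraints $\|\vw_y\|_2\le\Lambda$ then separates over $y$. By Cauchy--Schwarz,
\[
\widehat{\mathfrak{R}}^{\bm{\gamma}}_\mathcal{D}(\mathcal{F})\le\frac{\Lambda}{N}\sum_{y}\mathbb{E}_{\bm{\epsilon}}\Big\|\sum_{i}\frac{\epsilon_{iy}}{\gamma_{y_i}}\phi(\vx_i)\Big\|_2 .
\]

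\textbf{Step 2 (Jensen and orthogonality).} Apply Jensen's inequality, $\mathbb{E}\|\cdot\|\le\sqrt{\mathbb{E}\|\cdot\|^2}$. Since the $\epsilon_{iy}$ are independent with $\mathbb{E}[\epsilon_{iy}\epsilon_{jy}]=\mathds{1}(i=j)$, the cross terms vanish and each $y$ gives the same value:
\[
\mathbb{E}_{\bm{\epsilon}}\Big\|\sum_{i}\frac{\epsilon_{iy}}{\gamma_{y_i}}\phi(\vx_i)\Big\|_2\le\sqrt{\sum_{k}\frac{1}{\gamma_k^2}\sum_{i\in I_k}\|\phi(\vx_i)\|_2^2}.
\]
Summing the $K$ identical terms over $y$ gives the bound
\[
\widehat{\mathfrak{R}}^{\bm{\gamma}}_\mathcal{D}(\mathcal{F})\le\frac{\Lambda K}{N}\sqrt{\sum_k\frac{1}{\gamma_k^2}\sum_{i\in I_k}\|\phi(\vx_i)\|_2^2}.
\]

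\textbf{Step 3 (bias--variance decomposition).} For each class, use the exact identity
\[
\frac{1}{N_k}\sum_{i\in I_k}\|\phi(\vx_i)\|_2^2=\|\hat{\bm{\mu}}_k\|_2^2+\|\hat{\mathbf{s}}_k\|_2^2 .
\]
This holds because the cross term $\frac{2}{N_k}\sum_{i\in I_k}(\phi(\vx_i)-\hat{\bm{\mu}}_k)^\top\hat{\bm{\mu}}_k$ vanishes, as deviations from the empirical mean sum to zero. With $N_k=N/K$, the inner sum equals $\frac{N}{K}\sum_k(\|\hat{\bm{\mu}}_k\|_2^2+\|\hat{\mathbf{s}}_k\|_2^2)/\gamma_k^2$. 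Substituting,
\[
\frac{\Lambda K}{N}\sqrt{\frac{N}{K}}=\Lambda\sqrt{\frac{K}{N}},
\]
which is exactly the stated bound.

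\textbf{Main obstacle.} The only delicate point is getting the prefactor right. The sum over $y$ produces a factor $K$, and this must combine with the $\sqrt{N/K}$ from the class-size identity to give $\sqrt{K/N}$. This requires bounding each of the $K$ decoupled terms separately, as in Steps 1 and 2, rather than applying Jensen once to a joint quantity; done jointly, the constant differs by a $\sqrt{K}$ factor. Everything else is routine.
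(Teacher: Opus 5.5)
Your proof is correct and follows the paper's argument step for step. You regroup the sum so each $\vw_y$ appears once, apply Cauchy--Schwarz with $\|\vw_y\|_2\le\Lambda$, and use Jensen together with $\mathbb{E}[\epsilon_{iy}\epsilon_{i'y}]=\mathds{1}(i=i')$ to reach $\frac{K\Lambda}{N}\sqrt{\sum_k\gamma_k^{-2}\sum_{i\in I_k}\|\phi(\vx_i)\|_2^2}$, then finish with the per-class mean/deviation identity and $|I_k|=N/K$.

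Your closing remark is inaccurate, though. A joint bound gives exactly the same factor $K$: for example, use $\sum_y\|u_y\|_2\le\sqrt{K}\bigl(\sum_y\|u_y\|_2^2\bigr)^{1/2}$ with $u_y=\sum_i\epsilon_{iy}\phi(\vx_i)/\gamma_{y_i}$, then apply Jensen. So nothing depends on treating the $K$ terms separately, and no $\sqrt{K}$ is lost.
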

The proof is provided in Section~\ref{sec:proof-lemma-rademacher}. This bound is \textbf{class-sensitive}, as it depends on the per-class squared norms of the means, per-class mean squared deviations and the class-specific margins, \ie, $\|\hat{\bm{\mu}}_k\|_2^2$,  $\|\hat{\mathbf{s}}_k\|_2^2$ and $\gamma_k^2$.
Note that we use a uniform bound $\Lambda$ on the $L_2$ norms of the classification head weight  $\|\vw\|_2$ rather than class-wise bounds, motivated by the key finding of~\cite{Cui_2024_CVPR}: disparity does not stem from classifier bias, and the $L_2$ norms of the weights $\|\vw\|_2$ are already balanced, as illustrated in Figure~\ref{fig:norm-of-classifier}, which shows $\|\vw\|_2$ values from a ResNet-50 trained on ImageNet.
In training, we use the cross-entropy surrogate $\ell_{\bm{\gamma},\mathsf{ce}}$ in place of the $\bm{\gamma}$-margin loss $\ell_{\bm{\gamma}}$. Let  $\widehat{\mathcal{R}}_\mathcal{D}^{\bm{\gamma},\mathsf{ce}}(f)=\frac{1}{N}{\sum}_{\vx,y\in\mathcal{D}}[\ell_{\bm{\gamma},\mathsf{ce}}(f,\vx,y)]$.  The following Lemma relates $\widehat{\mathcal{R}}_\mathcal{D}^{\bm{\gamma}}$ and $\widehat{\mathcal{R}}_\mathcal{D}^{\bm{\gamma},\mathsf{ce}}$. 
\begin{lemma}\label{lemma:risk-surrogate}
    For $\bm{\gamma} > \mathbf{0}$. Then, $\forall f\in \mathcal{F}$, we have $
        \widehat{\mathcal{R}}_\mathcal{D}^{\bm{\gamma}}(f) \leq \frac{1}{\ln 2}\widehat{\mathcal{R}}_\mathcal{D}^{\bm{\gamma},\mathsf{ce}}(f)$.
\end{lemma}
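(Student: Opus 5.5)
The inequality between the two empirical risks will follow from a pointwise one: it suffices to show that for every $(\vx,y)$ and every $f\in\mathcal{F}$,
\begin{equation*}
\ell_{\bm{\gamma}}(f,\vx,y)=\Phi_{\gamma_y}(\gamma_f(\vx,y))\le \frac{1}{\ln 2}\,\ell_{\bm{\gamma},\mathsf{ce}}(f,\vx,y).
\end{equation*}
Averaging this over $\mathcal{D}$ then gives the lemma.

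The plan is to express both losses in terms of the scaled margin $u=\gamma_f(\vx,y)/\gamma_y$. On the left, $\Phi_{\gamma_y}(\gamma_f)=\min(1,\max(0,1-u))$. On the right, use the rewritten form Eq.~(\ref{eq:logit-loss-rewrite}). Let $y^\star$ be the runner-up class, i.e.\ the maximizer of $f(\vx,y')$ over $y'\neq y$. Drop every term in the sum except $y^\star$; all terms are positive and $\ln$ is increasing, so
\begin{equation*}
\ell_{\bm{\gamma},\mathsf{ce}}(f,\vx,y)\ \ge\ \ln\!\left(1+e^{-u}\right),
\end{equation*}
since $(\vz_{y^\star}-\vz_y)/\gamma_y=-u$. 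The claim therefore reduces to the scalar inequality
\begin{equation*}
\min(1,\max(0,1-u))\le \frac{\ln(1+e^{-u})}{\ln 2}\qquad \text{for all } u\in\mathbb{R}.
\end{equation*}

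I would verify this by cases.
\begin{itemize}
\item For $u\ge 1$ the left side is $0$ and the right side is positive.
\item For $u\le 0$ the left side is $1$, while $e^{-u}\ge 1$ makes the right side at least $\ln 2/\ln 2=1$.
\item For $u\in[0,1]$ the left side is $1-u$. Set $g(u)=\ln(1+e^{-u})/\ln 2$. Then $g''(u)=e^{-u}/((1+e^{-u})^2\ln 2)>0$, so $g$ is convex, and $g'(0)=-1/(2\ln 2)\approx -0.72>-1$. Convexity gives $g(u)\ge g(0)+g'(0)u=1-u/(2\ln 2)\ge 1-u$.
\end{itemize}
This closes all three cases.

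The main (mild) obstacle is this middle case on $[0,1]$, namely checking that the normalized logistic loss dominates the linear ramp. The convexity-plus-tangent argument above handles it, with the key numerical fact being $2\ln 2>1$.
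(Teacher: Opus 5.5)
Your proof is correct and follows the paper's argument step for step. You reduce to the runner-up term, which the paper phrases as $\max_j a_j \le \ln\sum_j e^{a_j}$, and then check $\Phi_\gamma(u)\le\log_2(1+e^{-u/\gamma})$ on the same three regions. The only cosmetic difference is in the middle region: you use convexity of $g$ and its tangent at $0$, while the paper shows $h(t)=\log_2(1+e^{-t})-(1-t)$ is increasing because $h'(t)\ge 1-\tfrac{1}{2\ln 2}>0$; both rest on the same fact $2\ln 2>1$.
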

 The proof is provided in Section~\ref{sec:proof-risk-surrogate}. Building on Lemmas~\ref{lemma:risk-bound}, \ref{lemma:rademacher} and \ref{lemma:risk-surrogate}, we derive the following class-sensitive learning guarantee for margin-based cross-entropy. 

\begin{proposition}\label{propo:key-result}
For $\bm{\gamma}> \mathbf{0}$, consider $\mathcal{F}=\{(\vx,y) \mapsto \vw_y^\top \phi(\vx)\mid \vw_y \in \mathbb{R}^d, \|\vw\|_2 \leq \Lambda\}$. Let $\{\|\hat{\bm{\mu}}\|_2^2\}_{k=1}^K$ and $\{\|\hat{\mathbf{s}}\|_2^2\}_{k=1}^K$ denote the per-class squared norms of the feature means and the mean squared deviations, respectively. Then, the following bound holds for all $f\in \mathcal{F}$:
\begin{equation}\label{eq:final-proposition}
    \mathcal{R}(f)\leq \frac{1}{\ln 2}\widehat{\mathcal{R}}_\mathcal{D}^{\bm{\gamma},\mathsf{ce}}(f)+\frac{4\sqrt{2}\Lambda K}{\sqrt{N}}\sqrt{\sum_{k=1}^K\frac{\|\hat{\bm{\mu}}_k\|_2^2+\|\hat{\mathbf{s}}_k\|_2^2}{\gamma_k^2}} + \mathcal{O}(1/\sqrt{N}).
\end{equation}
\end{proposition}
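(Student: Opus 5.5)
The proof chains the three preceding lemmas. We start from Lemma~\ref{lemma:risk-bound}: for fixed $\bm{\gamma}>\mathbf{0}$ and any $\delta>0$, with probability at least $1-\delta$, simultaneously for all $f\in\mathcal{F}$,
\[
\mathcal{R}(f)\le \widehat{\mathcal{R}}_{\mathcal{D}}^{\bm{\gamma}}(f)+4\sqrt{2K}\,\widehat{\mathfrak{R}}_\mathcal{D}^{\bm{\gamma}}(\mathcal{F})+3\sqrt{\tfrac{\ln(2/\delta)}{2N}} .
\]
The last term is $\mathcal{O}(1/\sqrt{N})$ for fixed $\delta$, so it is absorbed into the $\mathcal{O}(1/\sqrt{N})$ of Eq.~(\ref{eq:final-proposition}). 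The proposition should therefore be read as a high-probability statement, with the confidence dependence hidden in the big-O.

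\textbf{Empirical risk.} We replace $\widehat{\mathcal{R}}_{\mathcal{D}}^{\bm{\gamma}}(f)$ by $\frac{1}{\ln 2}\widehat{\mathcal{R}}_\mathcal{D}^{\bm{\gamma},\mathsf{ce}}(f)$ using Lemma~\ref{lemma:risk-surrogate}. That lemma is deterministic and holds for every $f$, so it costs no probability.

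\textbf{Complexity term.} We invoke Lemma~\ref{lemma:rademacher}, which applies because the hypothesis class is exactly the norm-bounded linear head over the fixed encoder $\phi$. (We read the constraint $\|\vw\|_2\le\Lambda$ in the statement as $\|\vw_y\|_2\le\Lambda$ for all $y$, as in Lemma~\ref{lemma:rademacher}.) The lemma gives
\[
\widehat{\mathfrak{R}}^{\bm{\gamma}}_\mathcal{D}(\mathcal{F})\le \Lambda\sqrt{K/N}\,\sqrt{\textstyle\sum_k(\|\hat{\bm{\mu}}_k\|_2^2+\|\hat{\mathbf{s}}_k\|_2^2)/\gamma_k^2}.
\]
Multiplying by $4\sqrt{2K}$ yields $4\sqrt{2}\,\Lambda\,\sqrt{K}\sqrt{K}/\sqrt{N}=4\sqrt{2}\Lambda K/\sqrt{N}$ times the same square root, which is exactly the middle term of Eq.~(\ref{eq:final-proposition}). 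Summing the three pieces gives the claimed inequality on the same event of probability at least $1-\delta$.

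\textbf{Main obstacle.} The only delicate point is the role of $\phi$ and $\bm{\gamma}$. The Rademacher bound treats the features $\phi(\vx_i)$ as fixed, so the class is the linear head over a fixed representation. Moreover, $\bm{\gamma}$ in Eq.~(\ref{eq:margin-logit-loss}) is computed from the data, whereas Lemma~\ref{lemma:risk-bound} is stated for a fixed $\bm{\gamma}$. I would handle this by noting the remark after Lemma~\ref{lemma:risk-bound}: a union bound over a geometric grid of margins, as in Theorem 5.9 of \cite{10.5555/2371238}, makes the bound uniform in $\bm{\gamma}$ at the cost of $\ln$-$\ln$ terms. Those terms are still $\mathcal{O}(1/\sqrt{N})$ up to logarithmic factors. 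If one wants to avoid this extension, the alternative is to state the proposition for a fixed, data-independent $\bm{\gamma}$.
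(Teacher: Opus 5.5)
Your proposal is correct and follows the paper's argument: the paper obtains Proposition~\ref{propo:key-result} by chaining Lemma~\ref{lemma:risk-bound}, Lemma~\ref{lemma:risk-surrogate} and Lemma~\ref{lemma:rademacher}, using $4\sqrt{2K}\cdot\Lambda\sqrt{K/N}=4\sqrt{2}\Lambda K/\sqrt{N}$ and absorbing the confidence term into $\mathcal{O}(1/\sqrt{N})$, exactly as you do. Your remarks on the implicit high-probability qualifier and on data-dependent $\bm{\gamma}$ are reasonable extras, but the statement as written assumes a fixed $\bm{\gamma}>\mathbf{0}$ and a fixed $\phi$, so it does not need the uniform-in-$\bm{\gamma}$ extension.
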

This bound implies that if $f$ maintains a low empirical risk $\widehat{\mathcal{R}}_\mathcal{D}^{\bm{\gamma},\mathsf{ce}}(f)$ —often achievable with over-parameterized deep nets in practice — with relatively large margin values $\gamma_k$, then it enjoys a lower generation risk, as reflected by the second (complexity) term. 
\zbe{However, overly large $\gamma_k$ is undesirable. Because $\ell_{\bm{\gamma},\mathsf{ce}}$ grows monotonically with $\bm{\gamma}$, overly increasing $\gamma_k$  inflates the empirical margin risk and makes the loss harder to optimize.}
 Let the average margin budget be fixed as $\bar{c} = \frac{1}{K} \sum_k \gamma_k$. Under this constraint, our choice of $\bm{\gamma}$ is determined below.

\begin{corollary}\label{corollary:optimal-gamma} (\textbf{Our choice of $\bm{\gamma}$}) For fixed $\bar{c}$, the complexity term of the bound is minimized when 
    \begin{equation}
        \gamma_y = \frac{\bar{c}K(\|\hat{\bm\mu}_y\|_2^2+\|\hat{\mathbf{s}}_y\|_2^{2})^{\frac{1}{3}} }{\sum_{k=1}^K(\|\hat{\bm\mu}_k\|_2^2+\|\hat{\mathbf{s}}_k\|_2^{2})^{\frac{1}{3}} } \quad \text{for all } y \in [K].
    \end{equation}
\end{corollary}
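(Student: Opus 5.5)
Write $a_k=\|\hat{\bm\mu}_k\|_2^2+\|\hat{\mathbf{s}}_k\|_2^2>0$. The complexity term in Proposition~\ref{propo:key-result} is $\frac{4\sqrt{2}\Lambda K}{\sqrt N}\sqrt{\sum_k a_k/\gamma_k^2}$, and the prefactor does not depend on $\bm\gamma$. Since the square root is monotone, it is enough to solve
\begin{equation*}
\min_{\bm\gamma>\mathbf{0}} \; G(\bm\gamma)=\sum_{k=1}^K \frac{a_k}{\gamma_k^2}\quad\text{subject to}\quad \sum_{k=1}^K\gamma_k=K\bar c .
\end{equation*}
The plan is to locate the stationary point with a Lagrange multiplier and then show it is the global minimizer. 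For the global step I would use either convexity or a direct H\"older inequality; the latter also gives the value of the minimum.

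\textbf{Stationarity.} Form $L=\sum_k a_k\gamma_k^{-2}+\nu(\sum_k\gamma_k-K\bar c)$. Setting $\partial L/\partial\gamma_k=-2a_k\gamma_k^{-3}+\nu=0$ gives $\gamma_k=(2a_k/\nu)^{1/3}$, so $\gamma_k\propto a_k^{1/3}$. Imposing $\sum_k\gamma_k=K\bar c$ fixes the constant and yields exactly $\gamma_y=\bar cK a_y^{1/3}/\sum_k a_k^{1/3}$, the formula in the statement.

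\textbf{Global optimality.} On the positive orthant each map $\gamma\mapsto a_k\gamma^{-2}$ is strictly convex, and the constraint set is an affine slice of an open convex set. A stationary point of the Lagrangian is therefore the unique global minimizer. The alternative route is H\"older's inequality with exponents $3$ and $3/2$ applied to $a_k^{1/3}=(a_k^{1/3}\gamma_k^{-2/3})\cdot\gamma_k^{2/3}$:
\begin{equation*}
\sum_k a_k^{1/3}\le\Big(\sum_k \frac{a_k}{\gamma_k^2}\Big)^{1/3}\Big(\sum_k\gamma_k\Big)^{2/3}.
\end{equation*}
This gives $G(\bm\gamma)\ge(\sum_k a_k^{1/3})^3/(K\bar c)^2$. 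Equality holds if and only if $a_k\gamma_k^{-2}\propto\gamma_k$, i.e.\ $\gamma_k\propto a_k^{1/3}$, which again recovers the stated choice.

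\textbf{Main obstacle.} The only step needing real care is ruling out a minimum on the boundary, where some $\gamma_k\to0$. This is immediate because $G\to\infty$ there, since every $a_k>0$. If some $a_k=0$, that class contributes nothing to $G$, and I would note that the formula still applies, giving $\gamma_k=0$ for that class as a limiting case. The H\"older argument sidesteps the boundary issue entirely, so I would present that as the main proof and mention the Lagrange computation only as the heuristic derivation.
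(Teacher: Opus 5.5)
Your proposal is correct. Its Lagrange step coincides with the paper's proof, which also forms the Lagrangian, solves $-2\alpha_y\gamma_y^{-3}+\upsilon=0$ to get $\gamma_y\propto\alpha_y^{1/3}$, and normalizes with $\sum_k\gamma_k=K\bar c$.

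Where you differ is in what counts as the proof. The paper stops at stationarity and never shows that the critical point is actually a minimizer. You certify global optimality in one of two ways:
\begin{itemize}
\item through strict convexity of $\gamma\mapsto a_k\gamma^{-2}$ on the positive orthant, so the KKT point of this convex problem with an affine constraint is the unique minimizer;
\item more directly, through H\"older with exponents $3$ and $3/2$, which gives $G(\bm\gamma)\ge(\sum_k a_k^{1/3})^3/(K\bar c)^2$ with equality exactly when $\gamma_k\propto a_k^{1/3}$.
\end{itemize}
The H\"older route is calculus-free and self-contained. It also delivers uniqueness and the optimal value of the complexity term, and it needs no boundary analysis. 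The Lagrangian route is how one discovers the formula in the first place, so presenting H\"older as the proof and the Lagrange computation as the derivation, as you suggest, is a sound choice.

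Your normalization is also handled correctly. The paper's displayed intermediate step writes $\bar cK=(2\sum_k\alpha_k/\upsilon)^{1/3}$, which conflates $(\sum_k\alpha_k)^{1/3}$ with $\sum_k\alpha_k^{1/3}$, although its final formula is right.
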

This follows from solving a Lagrangian optimization problem, with the proof provided in Section~\ref{sec:proof-corollary}.

\noindent\textbf{Remark.} 
Analogous to Eq.~\ref{eq:final-proposition}, we bound the per-class error $\mathcal{R}_k$  for class $k$ w.p. at least $1-\delta$:  
\begin{equation}\label{eq:perclass-error}
    \mathcal{R}_k(f)\leq \tfrac{1}{\ln 2}\widehat{\mathcal{R}}_{\mathcal{D}_k}^{\bm{\gamma},\mathsf{ce}} + \tfrac{4}{\gamma_k}\widehat{\mathfrak{R}}_k(\mathcal{F}) + \varepsilon(\delta, N),
\end{equation}
where $\widehat{\mathfrak{R}}_k \propto \sqrt{\|\hat{\bm\mu}_k\|_2^2+\|\hat{\mathbf{s}}_k\|_2^2}$ and $\epsilon$ is a low-order term (see proof in Section~\ref{appsec:per-class-error}). 

Corollary~\ref{corollary:optimal-gamma} motivates our \textbf{logit‑level margin regularizer}: choosing
$\gamma_k\propto(\|\hat{\bm\mu}_k\|_2^2+\|\hat{\mathbf{s}}_k\|_2^{2})^{1/3}$
minimizes the generalization bound and balances Eq.~\eqref{eq:perclass-error}, since the feature variability term ($\sqrt{\|\hat{\bm\mu}_k\|_2^2+\|\hat{\mathbf{s}}_k\|_2^2}$) appears in the numerator while $\gamma_k$ controls the denominator. Their proportionality yields a more balanced contribution across classes.

Proposition~\ref{propo:key-result} justifies our \textbf{representation‑level regularizer}: the loss
$\ell_{\bar s}$ reduces the mean‑squared deviations
$\|\hat{\mathbf{s}}_k\|_2^2$, thereby tightening the complexity (second) term in Eq.~\eqref{eq:final-proposition}. By suppressing large variances in harder classes, this regularization also balances the per-class error bounds, preventing them from being dominated by high-variance classes.

\subsection{General Norm Bounds on Margin-Based Complexity}

Astute readers may wonder whether Proposition~\ref{propo:key-result} remains informative for cosine classifier models, \eg, CLIP models~\citep{radford2021learning}, where features  are $L_2$-normalized, \ie, $\|\phi(\vx)\|_2=1$. In this case, the factor $ \|\hat{\bm\mu}_y\|_2^2+\|\hat{\mathbf{s}}_y\|_2^{2}=\frac{1}{|I_k|}\sum_{i\in I_k}\|\phi(\vx_i)\|_2^2=1$ in the complexity term becomes constant, making it class-agnostic.
Fortunately, Proposition~\ref{propo:key-result} naturally extends to other norm-based formulations, as  in Proposition~\ref{thm:lp-lq-rad}. This allows  margin regularization under any $L_p$ norm with $p \geq 1$. 
\begin{proposition}[\textbf{General bound on the $\bm{\gamma}$-margin Rademacher complexity}]
\label{thm:lp-lq-rad}
Fix conjugate exponents $p,q\in[1,\infty]$ such that $1/p+1/q=1$.
Let $\mathcal{F}_q=\{(x,y)\mapsto \vw_y^\top\phi(\vx) \mid
      \vw_y \in \mathbb{R}^{d}, \|\vw_y\|_{q}\le \Lambda_q\}.$
For each class $k\in[K]$, we write the average $L_p^2$ norm as  $r_{k,p}^2=\frac{1}{|I_k|}
\sum_{i\in I_k}\|\phi(\vx_i)\|_{p}^2.$
Then, for all $f\in \mathcal{F}_q$, the empirical $\bm{\gamma}$-margin Rademacher complexity satisfies
\begin{equation}
    \widehat{\mathfrak{R}}_\mathcal{D}^{\bm{\gamma}}(\mathcal{F}_q) \leq C(p)\Lambda_q\sqrt{\frac{K}{N}} \sqrt{\sum_{k=1}^K\frac{r_{k,p}^2}{\gamma_k^2}}  
\end{equation}
where $C(p)$ is a constant factor depends on $p$ (its exact expression and proof appear in Section~\ref{sec:proof-general-bound}).
\end{proposition}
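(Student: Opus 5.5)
The plan follows the proof of Lemma~\ref{lemma:rademacher}, replacing Cauchy--Schwarz with Hölder duality and the $L_2$ variance identity with a Khintchine-type inequality for Rademacher sums in $L_p$.

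\textbf{Step 1 (decoupling the classes).} Start from the definition of $\widehat{\mathfrak{R}}_\mathcal{D}^{\bm{\gamma}}(\mathcal{F}_q)$. Exchange the sums over $i$ and $y$, and absorb $1/\gamma_k$ into the features. For each label $y$ set
\[
\mathbf{v}_y=\sum_{k=1}^K\sum_{i\in I_k}\epsilon_{iy}\phi(\vx_i)/\gamma_k .
\]
The supremum then separates over the independent weight vectors $\vw_y$, because the constraint $\|\vw_y\|_q\le\Lambda_q$ is imposed per class. Hölder duality gives $\sup_{\|\vw_y\|_q\le\Lambda_q}\vw_y^\top\mathbf{v}_y=\Lambda_q\|\mathbf{v}_y\|_p$. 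Hence
\[
\widehat{\mathfrak{R}}_\mathcal{D}^{\bm{\gamma}}(\mathcal{F}_q)=\frac{\Lambda_q}{N}\sum_{y=1}^K\mathbb{E}_{\bm\epsilon}\|\mathbf{v}_y\|_p\le\frac{\Lambda_q K}{N}\,\mathbb{E}_{\bm\epsilon}\|\mathbf{v}_1\|_p ,
\]
since the $K$ columns of $\bm\epsilon$ are identically distributed.

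\textbf{Step 2 (bounding $\mathbb{E}\|\sum_i\epsilon_i\mathbf{a}_i\|_p$ with $\mathbf{a}_i=\phi(\vx_i)/\gamma_{k(i)}$).} This is the main obstacle, because the bound depends on the geometry of $L_p$.

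For $2\le p<\infty$, $L_p$ has type~2. Apply Jensen to get $\mathbb{E}\|\cdot\|_p\le(\mathbb{E}\|\cdot\|_p^p)^{1/p}$, apply the scalar Khintchine inequality coordinatewise, and then use Minkowski's inequality in $L_{p/2}$. This yields
\[
\mathbb{E}\Big\|\sum_i\epsilon_i\mathbf{a}_i\Big\|_p\le C(p)\Big(\sum_i\|\mathbf{a}_i\|_p^2\Big)^{1/2},
\]
with $C(p)=B_p\approx\sqrt{p}$.

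For $1\le p<2$ (and $p=\infty$), no dimension-free type-2 bound exists. Here I would compare norms: $\|\cdot\|_p\le d^{1/p-1/2}\|\cdot\|_2$ on the Rademacher sum, followed by $\|\cdot\|_2\le\|\cdot\|_p$ on each $\mathbf{a}_i$. Alternatively, for $p=\infty$, compare with $L_{\ln d}$. Either way $C(p)$ absorbs a dimension factor such as $d^{1/p-1/2}$ or $\sqrt{2e\ln d}$. I expect the stated $C(p)$ to be piecewise in this manner. For $p=2$ the constant is $C(2)=1$, which recovers Lemma~\ref{lemma:rademacher}.

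\textbf{Step 3 (regrouping by class and applying Cauchy--Schwarz).} Write
\[
\sum_i\|\mathbf{a}_i\|_p^2=\sum_k\frac{|I_k|\,r_{k,p}^2}{\gamma_k^2}=\frac{N}{K}\sum_k\frac{r_{k,p}^2}{\gamma_k^2},
\]
using the balanced class sizes $|I_k|=N/K$. Combining with Step 1 gives
\[
\widehat{\mathfrak{R}}_\mathcal{D}^{\bm{\gamma}}(\mathcal{F}_q)\le\frac{\Lambda_qK}{N}\,C(p)\sqrt{\frac{N}{K}}\sqrt{\sum_k\frac{r_{k,p}^2}{\gamma_k^2}}=C(p)\Lambda_q\sqrt{\frac{K}{N}}\sqrt{\sum_k\frac{r_{k,p}^2}{\gamma_k^2}},
\]
which is the claimed bound.

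The only delicate bookkeeping is the Khintchine constant in Step 2 and, for $p<2$ or $p=\infty$, how the dimension dependence enters $C(p)$.
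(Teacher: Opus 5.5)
Your proof is correct, and where it departs from the paper it is the sounder argument.

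\textbf{Shared step.} The paper has the same Step~1: Hölder duality plus exchangeability of the columns of $\bm{\epsilon}$, reducing everything to $\frac{K\Lambda_q}{N}\mathbb{E}\|\vu\|_p$ with $\vu=\sum_i\epsilon_i\mathbf{a}_i$.

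\textbf{Where the paper's argument breaks for $p<\infty$.} The paper then treats all $1\le p<\infty$ in one line. Citing Khintchine, it asserts $\bigl(\sum_j\mathbb{E}|u_j|^p\bigr)^{1/p}\le C_p\bigl(\sum_j\sigma_j^2\bigr)^{1/2}=C_p\sqrt{\tfrac{N}{K}\sum_k r_{k,p}^2/\gamma_k^2}$, with $\sigma_j^2=\sum_i a_{ij}^2$, $C_p=1$ for $p\le 2$, and $C_p=2^{p/2}\Gamma(\frac{p+1}{2})/\sqrt{\pi}$ for $p>2$. Both parts of this fail somewhere:
\begin{itemize}
\item The inequality is false for $p<2$. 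Take $u_j=\epsilon_j$: the left side is $d^{1/p}$ and the right side is $\sqrt{d}$.
\item The equality holds only at $p=2$. The sum $\sum_j\sigma_j^2=\sum_i\|\mathbf{a}_i\|_2^2$ is the $L_2$ quantity, which exceeds $\sum_i\|\mathbf{a}_i\|_p^2$ when $p>2$. So for $p>2$ the chain breaks in the harmful direction.
\end{itemize}

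\textbf{How your route compares.} Your Minkowski step in $L_{p/2}$, $\bigl(\sum_j\sigma_j^p\bigr)^{1/p}=\bigl\|\sum_i\mathbf{a}_i^{\circ 2}\bigr\|_{p/2}^{1/2}\le\bigl(\sum_i\|\mathbf{a}_i\|_p^2\bigr)^{1/2}$, is exactly what repairs the $p\ge 2$ case. It gives the sharp constant $B_p$. The paper's value equals $B_p^p\ge B_p$, so its stated constant survives, only loosely. For $p=\infty$ the paper uses the sub-Gaussian maximal inequality over the $2d$ variables $\pm u_j$, each with variance proxy at most $\sum_i\|\mathbf{a}_i\|_\infty^2$. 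That gives $\sqrt{2\ln(2d)}$ directly and is a little cleaner than your comparison with $L_{\ln d}$, though both are $O(\sqrt{\ln d})$.

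\textbf{The case $1\le p<2$.} Do not try to recover the paper's $C(p)=1$ here: your dimension factor is unavoidable. Take $d\ge N$, $\phi(\vx_i)=\mathbf{e}_i$, and $\gamma_k=1$ for all $k$. Hölder duality is attained, so $\widehat{\mathfrak{R}}_\mathcal{D}^{\bm{\gamma}}(\mathcal{F}_q)=\frac{\Lambda_q}{N}\sum_y\mathbb{E}\bigl\|\sum_i\epsilon_{iy}\mathbf{e}_i\bigr\|_p=\Lambda_q K N^{1/p-1}$. Meanwhile $r_{k,p}=1$ makes the right-hand side $C(p)\Lambda_q K N^{-1/2}$. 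Hence $C(p)\ge N^{1/p-1/2}$, which is unbounded in $N$ and matches your $d^{1/p-1/2}$ exactly when $d=N$.

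So for $p<2$ the proposition holds only with a $d$-dependent constant, as the paper already accepts for $p=\infty$. That is precisely the version your proposal proves.
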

 Analogously to Corollary~\ref{corollary:optimal-gamma}, the corresponding choice of $\bm{\gamma}$ under $L_p$ norm with $p \in [1, \infty]$ is 
\begin{equation}\label{eq:general-norm-margin}
    \gamma_{y,p}=\frac{\bar{c}Kr^{2/3}_{y,p}}{\sum_kr^{2/3}_{k,p}}.
\end{equation}
For cosine classifiers, one can restore class-dependent margin by selecting non-$L_2$ norms, \eg, $p=3$.

\section{Experiments}\label{sec:exp}

\begin{table}[t]
\centering
\caption{\textbf{Comparison against existing methods} in reducing class-wise accuracy disparity on CIFAR-100 and ImageNet. ResNet-32 and CLIP ResNet-50 backbones are adopted respectively.}
\vspace{-2mm}
\tabstyle{3pt}
\begin{subtable}[t]
{.48\textwidth}\caption{\textbf{CIFAR-100}}
\vspace{-2mm}
    \centering
   \scalebox{0.9}{
    \begin{tabular}{l|cccb}
    \toprule
    \rowcolor{white}
    Method        & Overall   &  Easy   & Medium & Hard        \\ 
    \midrule
    ERM & 70.9 & 84.5 & 71.0 & 56.7 \\ \midrule
    LfF & 69.1 & 83.6 {\scriptsize {\color{gray}\textbf(-0.9)}} &	70.1 {\scriptsize {\color{gray}\textbf(-0.9)}} &	53.7 {\scriptsize {\color{gray}\textbf(-3.0)}}\\
    JTT & 70.6 &  84.3 {\scriptsize {\color{gray}\textbf(-0.2)}} & 	70.8 {\scriptsize {\color{gray}\textbf(-0.2)}} &	56.2 {\scriptsize {\color{gray}\textbf(-0.5)}}\\
    EQL & 70.7 &	84.4 {\scriptsize {\color{gray}\textbf(-0.1)}} & 70.9 {\scriptsize {\color{gray}\textbf(-0.1)}}&  56.4 {\scriptsize {\color{gray}\textbf(-0.3)}}\\
    LDAM & 71.1 & 84.7 {\scriptsize\textbf{(+0.2)}} & 71.2 {\scriptsize\textbf{(+0.2)}} & 57.0 {\scriptsize\textbf{(+0.3)}} \\
    \zbe{LGM} & 	70.8  &	84.0  {\scriptsize {\color{gray}(-0.5)}} &	71.4 {\scriptsize\textbf{(+0.4)}}	&	 56.6  {\scriptsize {\color{gray}(-0.1)}}\\
    \zbe{CMIC-DL} &71.1  & 	85.1 {\scriptsize\textbf{(+0.6)}}	 &  	70.9  {\scriptsize {\color{gray}(-0.1)}}	&	56.8 {\scriptsize\textbf{(+0.1)}}\\
    \zbe{SAM} & 71.0 &	84.6 {\scriptsize\textbf{(+0.1)}}	 &  	71.1 {\scriptsize\textbf{(+0.1)}}	& 56.9 {\scriptsize\textbf{(+0.2)}}	\\
    \zbe{DFL} & 	71.3  & 	84.8 {\scriptsize\textbf{(+0.3)}}&	71.3 {\scriptsize\textbf{(+0.3)}}	& 57.1 {\scriptsize\textbf{(+0.4)}}	\\
    {SupCon} & 71.5	& 85.0 {\scriptsize\textbf{(+0.5)}} &	72.1 {\scriptsize\textbf{(+1.1)}} &	57.5 {\scriptsize\textbf{(+0.8)}} \\
    DRL & 71.9 & 85.5 {\scriptsize\textbf{(+1.0)}} &	72.5 {\scriptsize\textbf{(+1.5)}} & 	57.7 {\scriptsize\textbf{(+1.0)}} \\
    CSR & 71.2 &   85.1 {\scriptsize\textbf{(+0.6)}} & 71.3 {\scriptsize\textbf{(+0.3)}} & 57.1 {\scriptsize\textbf{(+0.4)}} \\
    DRO & 71.6 & 85.1 {\scriptsize\textbf{(+0.6)}} & 72.2 {\scriptsize\textbf{(+1.2)}} & 57.2 {\scriptsize\textbf{(+0.5)}}\\
    FairDRO & 72.0 &85.7 {\scriptsize\textbf{(+1.2)}} & 72.5 {\scriptsize\textbf{(+1.5)}} & 		57.7 {\scriptsize\textbf{(+1.0)}} \\		
    \midrule
    \ours~(ours) & \textbf{73.9}  & \textbf{85.9} {\scriptsize \textbf{(+1.4)}} & \textbf{73.8} 
 {\scriptsize \textbf{(+2.8)}} & \textbf{61.9} {\scriptsize \textbf{(+5.2)}} \\ \bottomrule
    \end{tabular}}
\end{subtable}
\hspace{1mm} 
\begin{subtable}[t]{.48\textwidth}\caption{\textbf{ImageNet}}
\vspace{-2mm}
    \centering
    \scalebox{0.9}{
    \begin{tabular}{l|cccb}
    \toprule
    \rowcolor{white}
    Method        & Overall   &  Easy   & Medium & Hard        \\ 
    \midrule
    ERM     & 75.2 & 91.1 & 78.3  & 56.4	 \\ 
    \midrule 
    LfF & 74.4  & 90.2 {\scriptsize{\color{gray}\textbf(-0.9)}} & 77.8 {\scriptsize{\color{gray}\textbf(-0.5)}} & 55.2 {\scriptsize{\color{gray}\textbf(-1.2)}}\\
    JTT & 74.8& 90.7 {\scriptsize{\color{gray}(-0.4)}} & 77.9 {\scriptsize{\color{gray}(-0.4)}} & 55.7 {\scriptsize{\color{gray}(-0.4)}}\\
    EQL & 75.3 & 91.3 {\scriptsize \textbf{(+0.2)}} & 78.4 {\scriptsize \textbf{(+0.3)}}& 56.2 {\scriptsize{\color{gray}(-0.2)}}\\
    LDAM & 75.4 & 91.5 {\scriptsize \textbf{(+0.4)}} & 78.6 {\scriptsize \textbf{(+0.3)}} & 56.1 {\scriptsize{\color{gray}(-0.3)}} \\
    \zbe{LGM} & 	75.3& 	 90.9 {\scriptsize{\color{gray}(-0.2)}} & 	78.3 {\scriptsize \textbf{(+0.0)}} & 56.6 {\scriptsize \textbf{(+0.2)}} \\
    \zbe{CMIC-DL} & 75.2 & 91.0 {\scriptsize{\color{gray}(-0.1)}} & 78.2 {\scriptsize{\color{gray}(-0.1)}} & 56.5 {\scriptsize \textbf{(+0.1)}}  \\
    \zbe{SAM} & 75.6   & 	91.4 {\scriptsize \textbf{(+0.3)}}	& 	78.8 {\scriptsize \textbf{(+0.5)}} & 56.6 {\scriptsize \textbf{(+0.2)}} \\
    \zbe{DFL} & 	75.8   & 	91.7 {\scriptsize \textbf{(+0.6)}}& 	78.6 {\scriptsize \textbf{(+0.3)}} &  57.1 {\scriptsize \textbf{(+0.7)}}\\
    {SupCon} & 75.7 & 91.4 {\scriptsize \textbf{(+0.3)}} &78.9 {\scriptsize \textbf{(+0.6)}}& 56.7 {\scriptsize \textbf{(+0.3)}} \\
    DRL &75.5 & 91.4 {\scriptsize \textbf{(+0.3)}} &78.5 {\scriptsize \textbf{(+0.2)}} & 56.6 {\scriptsize \textbf{(+0.2)}}\\
    CSR &75.1 & 90.8 {\scriptsize{\color{gray}(-0.3)}} &77.9 {\scriptsize{\color{gray}(-0.4)}}& 56.6 {\scriptsize \textbf{(+0.2)}}  \\
    DRO &75.9 &91.0 {\scriptsize{\color{gray}(-0.1)}}&79.3 {\scriptsize \textbf{(+1.0)}}& 57.2 {\scriptsize \textbf{(+0.8)}}\\
    FairDRO & 76.1 & 91.5 {\scriptsize\textbf{(+0.4)}} & 79.6 {\scriptsize \textbf{(+1.3)}} & 57.3 {\scriptsize \textbf{(+0.9)}} \\ \midrule
     \ours~(ours)    & 76.9 &  91.5 {\scriptsize\textbf{(+0.4)}}   &  79.7 {\scriptsize\textbf{(+1.4)}}  &  59.6 {\scriptsize\textbf{(+3.2)}}     \\
    \bottomrule
    \end{tabular}}
\end{subtable}
\label{tab:sota_results}
\vspace{-2mm}
\end{table}

\begin{table}[t]
\centering
\caption{\textbf{Evaluation across diverse model architecture.} We perform \textbf{linear probing} on MoCov2 ResNet-50 using only  $\ell_{\bm{\gamma},\mathsf{ce}}$. CLIP models are \textbf{cosine classifier} models. \zbe{RN: ResNet. W-RN: WideResNet. TfS: Train from Scratch.}}
\vspace{-2mm}
\tabstyle{3pt}
\begin{subtable}[t]
{.48\textwidth}\caption{\textbf{CIFAR-100}}
\vspace{-2mm}
    \centering
   \scalebox{0.9}{
    \begin{tabular}{l|cccb}
    \toprule
    \rowcolor{white}
    Model        & Overall   &  Easy   & Medium & Hard        \\ 
    \midrule
    ResNet-20     & 68.7 & 83.3 & 70.3  & 53.0     \\
    \quad + \ours   & 70.9  & 84.4 {\scriptsize\textbf{(+1.1)}}   & 72.2 {\scriptsize \textbf{(+1.9)}}  & 56.7 {\scriptsize \textbf{(+3.7)}}     \\ \midrule
    PreAct RN-20         & 69.1 & 83.8 & 70.5 & 53.4      \\ 
    \quad + \ours       & 71.3  & 84.7 {\scriptsize \textbf{(+0.9)}} & 73.2 {\scriptsize \textbf{(+2.7)}} &  56.6 {\scriptsize \textbf{(+3.2)}} \\ \midrule
 PreAct RN-32 & 71.2 & 85.0 & 71.7 & 56.7 \\ 
 \quad + \ours & 73.3 & 86.1 {\scriptsize \textbf{(+1.1)}} & 73.9 {\scriptsize \textbf{(+2.2)}}&  60.1 {\scriptsize \textbf{(+3.4)}}\\ \midrule
\zbe{W-RN-22-10} & 78.4 & 89.6 & 80.0 & 65.4 \\
 \zbe{\quad + \ours} & 81.2 & 90.7 {\scriptsize \textbf{(+1.1)}}  & 82.5 {\scriptsize \textbf{(+2.5)}}& 70.2 {\scriptsize \textbf{(+4.8)}} \\
    \bottomrule
    \end{tabular}}
\end{subtable}
\hspace{1mm} 
\begin{subtable}[t]{.48\textwidth}\caption{\textbf{ImageNet}}
\vspace{-2mm}
    \centering
    \scalebox{0.9}{
    \begin{tabular}{l|cccb}
    \toprule
    \rowcolor{white}
    Model        & Overall   &  Easy   & Medium & Hard        \\ 
    \midrule
    \zbe{RN-50 TfS} & 71.7 & 88.5 & 74.1 & 52.6 \\
 \zbe{\quad + \ours} & 74.2 & 89.9 {\scriptsize \textbf{(+1.4)}} & 76.9 {\scriptsize \textbf{(+2.8)}}& 55.9 {\scriptsize \textbf{(+3.3)}} \\ \midrule
    MoCov2 RN-50 & 71.1 & 89  & 73.5 & 50.7 \\
    \quad + \ours~($\ell_{\bm{\gamma},\mathsf{ce}}$) & 72.4    & 89.6  {\scriptsize \textbf{(+0.6)}} & 74.8
 {\scriptsize \textbf{(+1.3)}} & 52.7  {\scriptsize \textbf{(+2.0)}} \\ \midrule
    CLIP ViT-B/32    & 75.6  & 	91.4 & 	78.4  &  57.0     \\ 
    \quad + \ours     & 77.1  & 91.7 {\scriptsize \textbf{(+0.3)}} & 79.9  {\scriptsize \textbf{(+1.5)}} & 59.8   {\scriptsize \textbf{(+2.8)}} \\ \midrule
 MAE ViT-B/16 & 80.4 & 94.5  & 83.9 & 62.7 \\
 \quad + \ours & 82.0 &  94.8 {\scriptsize \textbf{(+0.3)}} &  85.3 {\scriptsize \textbf{(+1.4)}}&  66.1  {\scriptsize \textbf{(+3.4)}}\\
    \bottomrule
    \end{tabular}}
\end{subtable}
\label{tab:main-results}
\vspace{-4mm}
\end{table}

\noindent\textbf{Datasets and backbones.} We select seven datasets: CIFAR-100~\citep{krizhevsky2009learning},  ImageNet~\citep{deng2009imagenet}, StanfordCars~\citep{krause20133d}, OxfordPets~\citep{parkhi2012cats}, Flowers~\citep{nilsback2008automated}, Food~\citep{bossard2014food}, and FGVCAircraft~\citep{maji2013fine}, using both CNNs~\citep{he2016deep,he2016identity} and Vision Transformers~\citep{dosovitskiy2021an}. 
For CIFAR-100, we use ResNet-\{20, 32\}~\citep{he2016deep} , PreAct ResNet-\{20, 32\}~\citep{he2016identity}, \zbe{and WideResNet-22-10~\citep{zagoruyko2016wide}} as our backbones. 
\zbe{For training from scratch on ImageNet, we use ResNet-50~\citep{he2016deep} as the backbone.}
For fine-tuning on ImageNet, we adopt four pre-trained models: MAE~\citep{he2022masked} ViT-B/16, MoCov2~\citep{chen2020improved} ResNet-50, and CLIP~\citep{radford2021learning} with ResNet-50 and ViT-B/32. For the rest datasets, we use CLIP ResNet-50. 

\noindent\textbf{Baselines.} We compare our \ours~against \zbe{14} baselines: (1) ERM~\citep{vapnik1991principles}, empirical risk minimization with cross-entropy loss.
Error-based reweighting methods: (2) LfF~\citep{nam2020learning} and (3) JTT~\citep{liu2021just}.
Margin adjustment methods: (4) EQL~\citep{tan2020equalization} and (5) LDAM~\citep{cao2019learning}.
Contrastive representation learning methods: (6) \zbe{SupCon}~\citep{khosla2020supervised} and (7) DRL~\citep{samuel2021distributional}.
(8) CSR~\citep{jin2025enhancing}, a regularization technique for enhancing robust fairness.
Distributionally robust optimization methods: (9) DRO~\citep{Sagawa2020Distributionally} and (10) FairDRO~\citep{jung2023reweighting}. \zbe{(11) Sharpness-Aware Minimization (SAM)~\citep{foret2021sharpnessaware}. Methods that promote class separability: (12) DFL~\citep{wen2016discriminative} (13) LGM~\citep{wan2018rethinking} (14) CMIC-DL~\citep{yang2025conditional}}  Further details of the baselines are given in Section~\ref{appsec:baseline-details}. 

\noindent\textbf{Evaluation metrics.}
We report the average top-1 accuracy as the overall performance metric. To assess performance gap across classes, we follow~\cite{Cui_2024_CVPR} to partition each dataset into three equal-sized subsets—``easy'', ``medium'', and ``hard''—based on per-class accuracy ranking. For instance, classes in the ``easy'' group correspond to the top one-third in per-class accuracy. We report the average accuracy for each of the three subsets. In Section~\ref{appsec:add-exps}, we also report the per-class accuracies of \ours~to without the partition.

\noindent\textbf{Implementation details.} For CIFAR-100, our training configurations are aligned with~\cite{cao2019learning}. We follow the training setup of~\cite{he2022masked} for fine-tuning MAE and \cite{he2020momentum} for MoCov2, and adopt the configuration from~\cite{Wortsman_2022_CVPR} for CLIP ViT-B/32 and ResNet-50.
The decay rate for exponential moving average (EMA) is fixed at $0.9$ across all experiments. The hyper-parameter $\bar{c}$ in Eq.(\ref{eq:margin-logit-loss}) is selected from $\{1,2,3\}$ and $\lambda$ in Eq.(\ref{eq:overall-objective}) is tuned over $\{0.1, 0.3, 0.5, 0.7, 0.9\}$ on validation sets. In the case of cosine classifier models (\eg, CLIP), we adopt $p=3$ in Eq.~\eqref{eq:general-norm-margin}.
\zbe{By default, we use standard augmentations such as random cropping and flipping for fair comparison. We further evaluate our \ours~under advanced augmentation strategies (\eg, RandAugment\citep{cubuk2020randaugment} and AutoAug~\citep{8953317}); the results appear in  Table~\ref{tab:data-aug}  Sec.~\ref{appsec:discuss}.
}\zbe{All experiments are repeated three times, standard deviations are reported in Tab.~\ref{tab:std}.}
 See Section~\ref{appsec:imple-details} for more details.
 
\subsection{Main Results}\label{sec:compareSoTA}
\noindent\textbf{Comparison against existing methods.} 
We select ResNet-32 and CLIP ResNet-50 as backbones for CIFAR-100 and ImageNet, respectively, and compare the results with various baselines in Table~\ref{tab:sota_results}. We observe that: 
(1) Reweighting methods (LfF, JTT) underperform ERM, often degrading performance on ``hard'' classes. 
(2) Margin adjustment methods (EQL, LDAM) yield  minimal gains and fail to reduce class-level disparity. 
(3) Representation learning and robust optimization methods (\zbe{SupCon}, DRL, CSR, DRO, FairDRO) improve overall accuracy but fail to reduce disparity as the gain on
ges``hard'' classes is even lower than that on ``medium/easy" classes (\eg, FairDRO and \zbe{SupCon}) and sometimes exhibit trade-offs, \eg, on ImageNet, DRO improves ``hard'' classes by \(0.8\%\) while sacrificing ``easy'' classes by \(-0.1\%\).
(4) Our \ours~achieves consistent improvements across both datasets, substantially boosting ``hard'' class accuracy while mildly improving on ``easy'' one, thus reducing the performance disparity without trade-offs. 

\begin{figure}[t]
\centering
\includegraphics[width=0.85\textwidth]{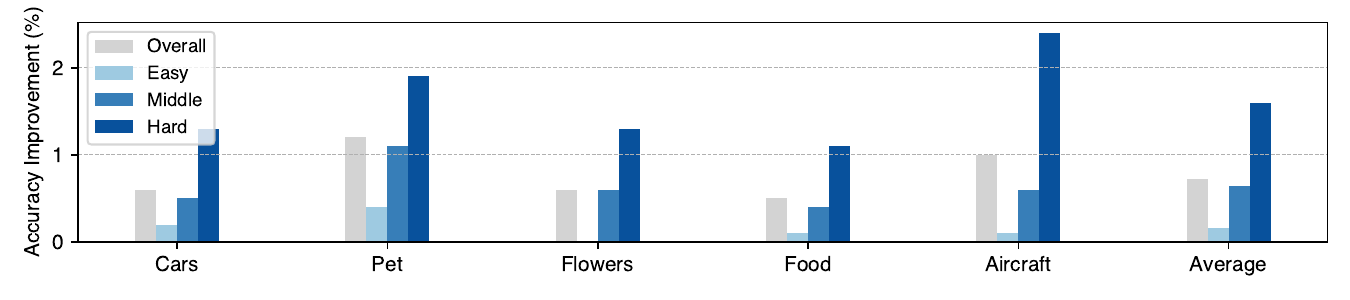}
\caption{\textbf{Relative improvements on fine-grained datasets}:  StanfordCars, OxfordPets, Flowers, Food and FGVCAircraft with CLIP ResNet-50. Numerical values are provided in Table~\ref{tab:other-datasets}.}
\label{fig:improve-datasets}
\vspace{-4mm}
\end{figure}

\noindent\textbf{Evaluation across diverse model architectures.} Table~\ref{tab:main-results} presents all metrics on CIFAR-100 and ImageNet. For MoCov2, we follow its standard evaluation protocol to perform linear classification on frozen features by omitting the representation margin loss $\ell_{\bar{s}}$. Recall that CLIP models employ cosine classifiers, and we set $p=3$ in Eq.~\eqref{eq:general-norm-margin}. Our \ours~achieve consistent improvements across all metrics, indicating improved generalization.
Notably, our method effectively reduces class-wise performance disparity, with substantially larger gains on ``hard'' classes  than those on ``easy'' classes.
  Our evaluation across linear classifiers, cosine classifier models, \zbe{training from scratch}, diverse pretraining objectives and model architectures validates the general  effectiveness of \ours.

\noindent\textbf{Evaluation across fine-grained datasets.} Figure~\ref{fig:improve-datasets} illustrates the relative performance gains of \ours~on five fine-grained benchmarks using CLIP ResNet-50, reinforcing its benefit for both balanced performance and generalization. Complete results for all metrics are in Table~\ref{tab:other-datasets} in Section~\ref{appsec:add-exps}.

\subsection{Ablation Studies}
\begin{table}[t]
    \captionof{table}{\textbf{Main component analysis} on CIFAR-100 using ResNet-32.}\label{tab:ablation-main}
    \centering
    \tabstyle{8pt}
    \begin{tabular}{cl|cccb}
    \toprule
    \rowcolor{white}
    &Model        & Overall   &  Easy   & Medium & Hard        \\ 
    \midrule
    \textbf{(a)} &Baseline ($\ell_\mathsf{ce}$) & 70.9 & 84.5 & 71.0 & 56.7 \\ \midrule
    \textbf{(b)} & $\ell_{\bm{\gamma},\mathsf{ce}}$ with uniform $\bm{\gamma}$ & 70.6 {\scriptsize {\color{gray}\textbf(-0.3)}}  &84.6  {\scriptsize \textbf{(+0.1)}}& 70.8  {\scriptsize {\color{gray}\textbf(-0.2)}} & 56.5 {\scriptsize {\color{gray}\textbf(-0.2)}}\\
    \textbf{(c)} & $\ell_{\bm{\gamma},\mathsf{ce}}$ & 72.8 {\scriptsize \textbf{(+1.9)}} & 85.1 {\scriptsize \textbf{(+0.6)}} & 72.9 {\scriptsize \textbf{(+1.9)}} & 60.4 {\scriptsize \textbf{(+3.7)}} \\ \midrule 
    \textbf{(d)} & $\ell_\mathsf{ce} + \lambda \ell_{\bar s}$ with $\bar{s}\!=\!0$ & 71.6 {\scriptsize \textbf{(+0.7)}} & 85.0 {\scriptsize \textbf{(+0.5)}}& 72.2 {\scriptsize \textbf{(+1.2)}} & 57.6 {\scriptsize \textbf{(+0.9)}} \\
    \textbf{(e)}& $\ell_\mathsf{ce} + \lambda \ell_{\bar s}$  &72.3 {\scriptsize \textbf{(+1.4)}}&85.4 {\scriptsize \textbf{(+0.9)}} & 73.1 {\scriptsize \textbf{(+2.1)}} & 58.8 {\scriptsize \textbf{(+2.1)}} \\ \midrule
    \textbf{(f)}& \ours: $\ell_{\bm{\gamma},\mathsf{ce}} + \lambda \ell_{\bar s}$ & 73.9 {\scriptsize \textbf{(+3.0)}}  & 85.9 {\scriptsize \textbf{(+1.4)}} & 73.8 
 {\scriptsize \textbf{(+2.8)}} & 61.9 {\scriptsize \textbf{(+5.2)}} \\ 
    \bottomrule
    \end{tabular}
\vspace{-4mm}
\end{table}

\begin{figure}[t]
\captionsetup{justification=centering}
\centering
\begin{minipage}{0.4\linewidth}
    \centering
\includegraphics[width=1\textwidth]{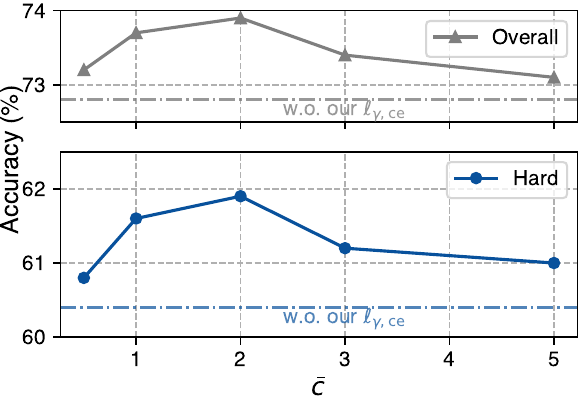}
    \captionof{figure}{Effect of $\bar{c}$.}
    \label{fig:effect-c}
\end{minipage}
\hspace{12mm}
\begin{minipage}{0.4\linewidth}
    \centering
    \includegraphics[width=1\textwidth]{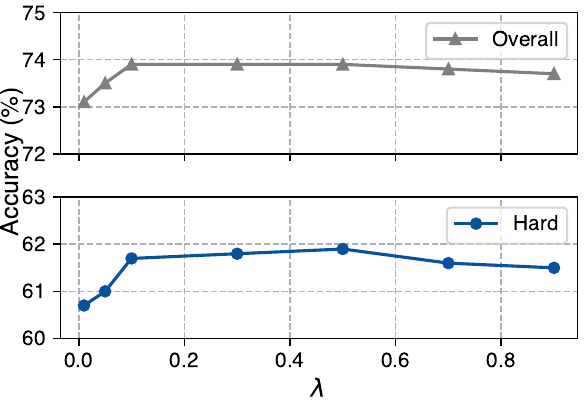}
    \captionof{figure}{Effect of $\lambda$.}
    \label{fig:effect-lambda}
\end{minipage}
\vspace{-4mm}
\end{figure}

\noindent\textbf{Main component analysis.} In Table~\ref{tab:ablation-main}, we ablate the effect of our margin regularization on logits and feature embeddings on CIFAR-100 using ResNet-32. Row \textbf{(a)} is the baseline trained with cross-entropy loss; Rows \textbf{(b)} and \textbf{(c)} apply our logit margin loss $\ell_{\bm{\gamma},\mathsf{ce}}$, where Row \textbf{(b)} uses a simple uniform margin tuned on validation set ($\gamma_k = \bar{c}, \forall k \in [K]$), and Row \textbf{(c)} uses our proposed class-wise margins; Rows \textbf{(d)} and \textbf{(e)} apply our representation margin loss $\ell_{\bar{s}}$ on top of standard cross-entropy loss $\ell_{\mathsf{ce}}$, where Row \textbf{(d)} sets $\bar{s}=0$ (removing feature-level margins), and Row \textbf{(e)} uses our full $\ell_{\bar{s}}$ formulation. Row \textbf{(f)} is our \ours, combining both $\ell_{\bm{\gamma},\mathsf{ce}}$ and $\ell_{\bar{s}}$.
Table~\ref{tab:add-main-component} in Sec.~\ref{appsec:add-exps} provides further results on ImageNet and CIFAR-100 across  diverse backbones.

From Rows \textbf{(a,b,c)}, we observe that: (1)  uniform margins does not improve balanced results or generalization—Row \textbf{(b)} performs similarly to Row \textbf{(a)}; and (2) our class-wise margin design in Eq.~\eqref{eq:margin-logit-loss} leads to clear improvements on ``hard'' classes and moderate gains on ``easy'' classes. From Rows \textbf{(a,d,e)}, we find that our representation margin loss yields notable improvements across all three subsets. In contrast, setting the representation margin to $\bar{s}=0$ fails to reduce disparity, as the gain on ``hard'' classes is even lower than that on ``medium'' classes. Finally, Row \textbf{(f)} shows that combining logit and representation margin losses further enhances both fairness and overall performance.

\noindent\textbf{Effect of $\bar c$.} In Figure~\ref{fig:effect-c}, we study the impact of $\bar{c}$ in Eq.~(\ref{eq:margin-logit-loss}) by varying its value from $0$ to $5$. Using ResNet-32 on CIFAR-100, we plot the average accuracy for both overall and ``hard'' classes. A small $\bar{c}$ leads to insufficient regularization, while a large $\bar{c}$ imposes overly strong constraints, making optimization difficult. Both curves exhibit similar trends, with optima located around $\bar{c}=2$.

\noindent\textbf{Effect of $\lambda$.} In Figure~\ref{fig:effect-lambda}, we examine the effect of $\lambda$ in Eq.(\ref{eq:overall-objective})  by varying its value from $0$ to $0.9$ and reporting the accuracy of overall and ``hard'' classes on CIFAR-100 using ResNet-32. \ours~achieves optimal overall and ``hard'' classes performance around $\lambda = 0.5$, and the performance remains stable in this region, indicating low sensitivity to $\lambda$.
\zbe{See Figure~\ref{fig:effect-c-imagenet}-\ref{fig:effect-lambda-imagenet}   for the study of $\bar{c}$ and $\lambda$ on ImageNet.}

\noindent\textbf{Using different norms $p$.} By default, we use  $L_2$ norm for all experiments except for $L_3$ norm when applied to cosine classifiers. Table~\ref{tab:norm-cifar} and Table~\ref{tab:norm-imagenet} (deferred to Appendix~\ref{appsec:add-exps} due to space limits) report the results with various $L_p$ norms on CIFAR-100 and ImageNet. As expected, using $p \geq 1$ consistently improves both fairness and generalization, supporting our theoretical framework.

\begin{figure*}[t]
\centering
\begin{minipage}{0.48\linewidth}
    \centering
    \includegraphics[width=1\textwidth]{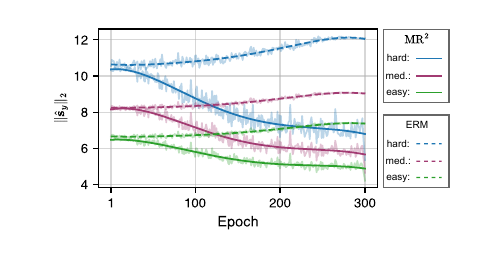}
    \captionof{figure}{Visualization of the evolving $\|\hat{\mathbf{s}}_y\|$.}
    \label{fig:vis}
\end{minipage}
\hfill
\begin{minipage}{0.48\linewidth} 
\makeatletter\def\@captype{table}
\centering
\tabstyle{4pt}
\caption{Effect of \ours~on output-level and classifier-level margins. ($\uparrow$): higher is better. C.-level: classifier-level margin. (Formal definition of the two margins in text)}
 \label{tab:margin}
  \centering
    \tabstyle{3pt}
    \begin{tabular}{lllll|c}
    \toprule
     & \multicolumn{4}{c|}{Output-level ($\uparrow$)} & \multicolumn{1}{c}{\multirow{2}{*}{C.-level ($\uparrow$)}} \\ \cmidrule{2-5}
     & Avg.    & Easy    & Med.    & Hard   & \multicolumn{1}{c}{}                           \\ \midrule
ERM  &   0.158   &  0.236 & 0.144 &  0.093  &    58.4                 \\
\ours &  0.373  &   0.435 & 0.396  &  0.289 &   81.8     \\    \bottomrule                                   
\end{tabular}
\end{minipage}
\vspace{-4mm}
\end{figure*}
\zbe{
\noindent\textbf{Visualization of the evolution of $\|\hat{\mathbf{s}}_y\|_2$.} Figure~\ref{fig:vis} illustrates the evolution of the average $\|\hat{\mathbf{s}}_y\|_2$ for easy, medium, and hard classes when training ResNet-50 from scratch on ImageNet. The contrast is clear: \ours\ consistently compresses the gap between easy and hard classes throughout training, whereas the baseline exhibits a steadily widening disparity. 

\noindent\textbf{Effect of \ours\ on output-level and classifier-level margins.} Comparing logit margins across models is difficult because their logit scales may differ. We therefore evaluate the output-level margin using prediction probabilities, defined as the gap between the ground-truth probability and the runner-up one:
 $m_\mathsf{o}(\mathbf{x},y)=\mathbb{P}_\theta(y\mid \mathbf{x}) -\max_{y'\neq y}\mathbb{P}_\theta({y'}\mid \mathbf{x})$. For classifier-level margin, we adopt the minimal pairwise angle distance among the classifier weights $\{\mathbf{w}_k\}_{k=1}^K$ from \cite{zhou2022learning}: $m_\mathsf{c}(\{\mathbf{w}_k\}_{k=1}^K)=\min_{i\neq j} \angle (\mathbf{w}_i,\mathbf{w}_j)=\mathrm{arccos}[\max_{i\neq j}\mathbf{w}_j^\top\mathbf{w}_j/(\|\mathbf{w}_i\|_2\|\mathbf{w}_j\|_2)]$. As  in Table~\ref{tab:margin}, \ours~significantly enlarges output-level margins across all subsets, reduces the margin gap between easy and hard class, and  raises the classifier-level margin from 58.4 to 81.8. 

}

\section{Conclusion}\label{sec:conclusion}
We present \ours, a theoretically grounded framework that reduces class-wise performance gap by regularizing both logit and representation margins. Built upon a novel class-sensitive learning guarantee, our method links generalization error to per-class feature variability and margin design. Guided by this insight, \ours~assigns adaptive margins to balance generalization bounds across classes and promotes intra-class compactness. 
Extensive experiments across seven datasets, diverse architectures, and pre-trained foundation models demonstrate that \ours~consistently improves performance on ``hard'' classes without degrading accuracy on ``easy'' ones, enhancing both fairness and overall performance.

\bibliography{iclr2026_conference}
\bibliographystyle{iclr2026_conference}

\appendix

\newpage
\tableofcontents
\newpage

\zbe{
\section{Additional Discussions}\label{appsec:discuss}

\noindent\textit{\textbf{Discussion 1:} Would it be better to make $\bm{\gamma}$ trainable after initialization according to Eq.~(\ref{eq:margin-logit-loss}) ?}

We do not find this beneficial. Since our loss $\ell_{\bm{\gamma},\mathsf{ce}}$ is monotonically increasing in $\bm{\gamma}$, treating $\bm{\gamma}$ as a trainable parameter would introduce a trivial descent direction: the optimizer can always reduce the loss by shrinking $\bm{\gamma}$. 
As a result, $\bm{\gamma}$ would systematically converge toward small values, collapsing the class-dependent margins and failing to enforce a balanced margin across easy and hard classes. 

\begin{table}[h]
    \captionof{table}{\ours~with advanced data augmentations  on ImageNet using CLIP-ResNet-50.}\label{tab:data-aug}
    \centering
    \tabstyle{8pt}
    \begin{tabular}{l|cccb}
    \toprule
    \rowcolor{white}
    Model        & Overall   &  Easy   & Medium & Hard        \\ 
    \midrule
    Standard data augmentation & 75.2	& 91.1	  & 78.3	 & 56.4\\  
    \quad + \ours &76.9 {\scriptsize \textbf{(+1.7)}}&  	91.5 {\scriptsize \textbf{(+0.4)}} & 79.7 {\scriptsize \textbf{(+1.4)}}	&59.6 {\scriptsize \textbf{(+3.2)}}	\\ \midrule
     RangAug & 75.8 &	91.5 &78.8& 57.2  \\
     \quad + \ours   & 	77.3  {\scriptsize \textbf{(+1.5)}} & 	92.0 {\scriptsize \textbf{(+0.5)}}&	79.8 {\scriptsize \textbf{(+1.0)}}& 60.1 {\scriptsize \textbf{(+2.9)}}\\ \midrule 
     AutoAug  & 76.2 & 91.9 & 79.2 & 57.5   \\
    \quad + \ours & 	78.1 {\scriptsize \textbf{(+1.9)}}&	92.3 {\scriptsize \textbf{(+0.4)}} &	81.0 {\scriptsize \textbf{(+1.8)}}& 61.1 {\scriptsize \textbf{(+3.5)}}\\  
    \bottomrule
    \end{tabular}
\end{table}

\noindent\textit{\textbf{Discussion 2:} Effectiveness of our \ours~with advanced data augmentations.} 

In our main experiments and ablation studies, we adopt standard data augmentations, \ie, random cropping and flipping. Here, we further evaluate the effectiveness of \ours~under stronger data augmentation strategies. Table\ref{tab:data-aug} reports results using RandAug~\citep{cubuk2020randaugment} and AutoAug~\citep{8953317} on ImageNet with CLIP-ResNet-50. Again, \ours~consistently boosts the accuracy of ``hard’’ classes while providing mild improvements for easy ones, thereby reducing performance disparity without introducing trade-offs.

\noindent\textit{\textbf{Discussion 3:} Relation to Neural Collapse.}

Neural Collapse (NC,~\citep{papyan2020prevalence}) provides a useful lens for understanding the geometry of deep classifiers in the terminal phase of training (TPT). 
Two empirical properties are particularly relevant to our setting: 
\begin{itemize}[leftmargin=4mm]
    \item \textbf{(NC1) Classifier collapse to an ETF structure.} The classifier weights converge to equal-norm vectors whose pairwise angles are all equal, forming an equiangular tight frame (ETF).
    \item \textbf{(NC2) Variability collapse of last-layer features.} Within-class feature variance collapses and features concentrate tightly around their respective class means.
\end{itemize}

In our experiments, we observe that these idealized NC patterns do not fully emerge on large-scale datasets, such as ImageNet. 
Specifically, equal norms in NC1 are approximately realized  (which is consistent with our theoretical assumption of a uniform $L_2$-norm bound $\Lambda$ on the classifier weights, rather than a class-dependent one), but the ETF structure of equal pairwise angles and the full variability collapse in NC2 are violated under our settings.

Our observations are consistent with the empirical study of~\cite{Cui_2024_CVPR}, who adapt NC diagnostic tools to measure feature and classifier variability. 
Their results show that (i) although classifier weights become approximately equal-norm, their pairwise angles do not converge to an ETF configuration—``easy'' classes exhibit larger separation angles (Sec.~4.1 and Fig.~5 in~\cite{Cui_2024_CVPR}); and (ii) the within-class feature variations remain non-negligible, with an average ratio between variance norm and mean feature norm of
\(\mathrm{Avg}_y[\|\hat{\mathbf{s}}_y\| / \|\hat{\bm{\mu}}_y\|] \approx 0.3285\), and these variations are imbalanced between easy and hard classes (Sec.~4.3 and Fig.~2 in~\cite{Cui_2024_CVPR}).

\begin{table}[ht]
\captionof{table}{Per-class $\|\hat{\mathbf{s}}_y\|_2$ and $\|\hat{\bm{\mu}}_y\|_2$ on MNIST. 
We adopt the same training configuration as in ImageNet using ResNet-18. 
On MNIST, the variability collapse of features (NC2) clearly emerges, with 
$\mathrm{Avg}_y[\|\hat{\mathbf{s}}_y\|_2/\|\hat{\bm{\mu}}_y\|_2] \approx 7.5 \times 10^{-4}$. 
In contrast, this phenomenon does not occur on ImageNet, where 
$\mathrm{Avg}_y[\|\hat{\mathbf{s}}_y\|_2/\|\hat{\bm{\mu}}_y\|_2] \approx 0.3285$. }\label{tab:mnist}
\scalebox{0.9}{
\begin{tabular}{ccccccccccc}
\toprule
class id & 1        & 2        & 3        & 4        & 5        & 6        & 7        & 8        & 9        & 10       \\ \midrule
$\|\hat{\mathbf{s}}_y\|_2$       & 0.0041 & 0.0032 & 0.0048 & 0.0051 & 0.0056 & 0.0063 & 0.0040 & 0.0056 & 0.0056 & 0.0063 \\
$\|\hat{\bm{\mu}}_y\|_2$      & 6.6982 & 6.7474 & 6.5227 & 6.8819 & 6.7275 & 6.9866 & 6.4091 & 6.4091 & 6.7820  & 6.6932 \\ \bottomrule
\end{tabular}
}
\end{table}

The gap between the ideal NC behavior and the empirical geometry observed on datasets such as ImageNet can be attributed to at least two factors. 
First, \textbf{task hardness}. 
The difficulty of a classification problem depends on the number of classes, the sample size, and the intrinsic complexity of the visual concepts. 
Inspecting Fig.~6 (first row) in~\cite{papyan2020prevalence}, one observes that as task difficulty increases, the within-class feature variation ceases to collapse toward zero: the magnitude of the within-class to between-class variance is on the order of \(10^{-4}\) for the easiest task (MNIST), but exceeds \(1\) for the most challenging subsampled ImageNet setting. 
To further illustrate that NC2 tends to appear only on simpler datasets, we repeat the same analysis on MNIST, using the same training configuration as in ImageNet  but with a ResNet-18 backbone. 
The results appear in Table~\ref{tab:mnist}: 
On MNIST, the variability collapse of features (NC2) does emerge: the ratio 
$\mathrm{Avg}_y[\|\hat{\mathbf{s}}_y\|_2/\|\hat{\bm{\mu}}_y\|_2]$ is approximately $7.5 \times 10^{-4}$. 
However, recall that on ImageNet this ratio is much larger, 
$\mathrm{Avg}_y[\|\hat{\mathbf{s}}_y\|_2/\|\hat{\bm{\mu}}_y\|_2] \approx 0.3285$, indicating that NC2 does not hold in this harder, large-scale setting.
Moreover, \cite{papyan2020prevalence} use only 600 training samples per class—roughly half of the full ImageNet training size—yet still observe substantial residual within-class variability. 
This suggests that non-vanishing \(\|\hat{\mathbf{s}}_y\|\) is expected for large-scale, high-complexity datasets.

Second, \textbf{practical regularization}. 
Modern training pipelines routinely employ regularization strategies that were not considered in~\cite{papyan2020prevalence} and that hinder complete variability collapse. 
Early stopping prevents networks from entering the extremely long TPT required for NC to fully emerge, since training is typically stopped according to validation performance rather than continued for many additional epochs. 
In addition, strong data augmentation substantially increases the effective difficulty of the task by injecting extra intra-class variation, while \cite{papyan2020prevalence} train without augmentation. 

Together, these factors naturally inhibit complete within-class variability collapse in contemporary large-scale settings, and justify modeling and regularizing the residual, class-dependent spreads \(\|\hat{\mathbf{s}}_y\|\) as we do in this work.

\noindent\textit{\textbf{Discussion 4:}  Applying our \ours~to class-imbalanced settings.}

In the main experiments, we omit evaluation under class-imbalance settings, because our goal is to disentangle two fundamentally different sources of class-wise performance disparity: (1) imbalanced class numbers (2) imbalanced class feature spreads (margins). 
Evaluating on imbalanced datasets would entangle these effects and obscure the specific phenomenon we aim to study.

For class-imbalance scenarios, extensive prior work already provides well-established solutions. These methods typically attribute class-wise disparity to shifted decision boundaries and thus introduce classifier-level corrections (often termed decoupled learning in long-tail recognition), such as freezing the backbone and retraining or reweighting classifier weights~\citep{Kang2020Decoupling}, or adjusting classifier biases/margins~\citep{menon2021longtail}. Notably, \citet{menon2021longtail} show that a Fisher-consistent solution for balanced error is achieved by subtracting the log class prior from the predicted logits (the logit-adjustment method, LA).

These classifier-level techniques are orthogonal to our approach and can be combined with \ours\ in a modular way. To demonstrate this, we incorporate LA into \ours\ and evaluate the hybrid method on the standard long-tail benchmark CIFAR-100-IB-100~\citep{cao2019learning} using ResNet-32.
The results in Table~\ref{tab:lt} show that incorporating \ours\ with the imbalanced-learning method LA further improves overall accuracy and reduces performance disparity.

\begin{table}[h]
    \captionof{table}{Applying our \ours~to CIFAR-100-IB-100.}\label{tab:lt}
    \centering
    \tabstyle{8pt}
    \begin{tabular}{l|cccb}
    \toprule
    \rowcolor{white}
    Model        & Overall   &  Easy   & Medium & Hard        \\ 
    \midrule
    ERM  & 37.7 &  69.6 & 	38.9 & 5.4 \\  
    LA  &	41.9 & 66.9 & 43.9 &  15.6 \\
    LA + \ours & 44.2	 &	68.1 &	46.3 & 18.9 \\  
    \bottomrule
    \end{tabular}
\end{table}

}

\section{Missing Proofs}\label{appsec:proof}

\subsection{Connection between $\ell_{\bar{s}}$ and $\hat{\ell}_{\bar{s}}$}\label{sec:proof-relation}

In this section, we prove that 
\begin{equation}
    \ell_{\bar{s}}(f, \vx, y)= \ln\left[1+\sum_{\vx^+\in \mathcal{D}_y\backslash \{\vx\}}\exp(\|\phi(\vx)-\phi(\vx^+)\|_2^2-2\bar{s})\right]
\end{equation}
is a differentiable relaxation of 
\begin{equation}\label{eq:restate-hard-feature-margin}
    \hat{\ell}_{\bar{s}}(f,\vx,y)=\max\left[0, \max_{\vx^+\in \mathcal{D}_y\backslash \{\vx\}}\left(\|\phi(\vx)-\phi(\vx^+)\|_2^2-2\bar{s}\right)\right].
\end{equation}

Let $z_j =\|\phi(\vx)-\phi(\vx^{+}_j)\|_2^{2}-2\bar s$, where $j \in [m]$ and $m=|\mathcal D_y\backslash\{\vx\}|$. Denote $z_{m+1}=0$. We can rewrite $\ell_{\bar s}$ and $\hat{\ell}_{\bar s}$ as:
\begin{equation}
    \ell_{\bar{s}}(f,\vx,y)=\ln\left[{\sum_{j=1}^{m+1}}\exp(z_j)\right], \quad \hat{\ell}_{\bar{s}}(f,\vx,y)=\max_{j\in [m+1]} z_j
\end{equation}
According to the property of LogSumExp, we have:
\begin{equation}
    \hat{\ell}_{\bar{s}}(f,\vx,y)=\max_{j\in [m+1]} z_j = \ln\exp(\max_{j\in [m+1]} z_j) \leq \ln \left[\sum_{j=1}^{m+1}\exp(z_j)\right]=\ell_{\bar{s}}(f,\vx,y).
\end{equation}
Therefore, $\ell_{\bar{s}}$ is a smooth approximation to the $ \hat{\ell}_{\bar{s}}$. \qed

\subsection{Connection between Pairwise Distance and Mean Squared Deviation}\label{sec:relation-pair-dis} 
For random vectors $(\vx,\vy)$ independently drawn from some distribution, let  $\bm{\mu}=\mathbb{E}[\vx]$ denote the mean and define the expected squared deviation as  $\|\mathbf{s}\|_2^2=\mathbb{E}[\|(\vx-\bm{\mu})\|^2_2]$. Then 
\begin{align}
    \mathbb{E}[\|\vx-\vy\|^2_2]&=\mathbb{E}[\|(\vx-\bm{\mu})-(\vy-\bm{\mu})\|_2^2] \\
    &=\mathbb{E}[\|(\vx-\bm{\mu})\|^2_2] + \mathbb{E}[\|(\vy-\bm{\mu})\|^2_2]-2\mathbb{E}[(\vx-\bm{\mu})^\top (\vy-\bm{\mu})] \\
    &=2\mathbb{E}[\|(\vx-\bm{\mu})\|^2_2]=2\|\mathbf{s}\|_2^2. \label{eq:independence}
\end{align}
First equality in Eq.~(\ref{eq:independence}) follows from the independence of $(\vx-\bm{\mu})$ and $(\vy-\bm{\mu})$.

\subsection{Proof of Lemma~\ref{lemma:risk-bound}}\label{sec:proof-lemma-margin}

\begin{restatedlemma}[Lemma \ref{lemma:risk-bound}: $\bm{\gamma}$-margin bound] Let $\mathcal{F}$ be a set of real valued functions. Given $\bm{\gamma} > \mathbf{0}$, then for any $\delta>0$, with probability at least $1-\delta$, for all $f\in \mathcal{F}$:
\begin{equation}\label{eq:gamma-bound-appendix}
    \mathcal{R}(f) \leq \widehat{\mathcal{R}}_{\mathcal{D}}^{\bm{\gamma}}(f)+4\sqrt{2K} \widehat{\mathfrak{R}}_\mathcal{D}^{\bm{\gamma}}(\mathcal{F})+3\sqrt{\frac{\ln\frac{2}{\delta}}{2N}}
\end{equation}
\end{restatedlemma}

\begin{proof}
    Let $\mathcal{G}$ be a set of real valued functions that map labeled instances $(\vx,y)$ into the $\bm{\gamma}$-margin loss:
    \begin{equation}
        \mathcal{G}=\{(\vx,y) \mapsto \ell_{\bm{\gamma}}(f,\vx,y)\mid f\in\mathcal{F}\}
    \end{equation}
    Since $\mathcal{G}$ maps into $[0, 1]$, applying Theorem 3.3 from~\cite{10.5555/2371238} yields that,  with probability at least $1-\delta$, the following holds for all $g \in \mathcal{G}$:
    \begin{equation}
        \underset{{\vx,y}}{\mathbb{E}}[g(\vx,y)] \leq \frac{1}{N}\sum_{i=1}^Ng(\vx_i,y_i)+2\widehat{\mathfrak{R}}_\mathcal{D}(\mathcal{G}) +3\sqrt{\frac{\ln{\frac{2}{\delta}}}{2N}}
    \end{equation}
    By substituting the definition of $g$, we obtain
    \begin{equation}\label{eq:naive-ineq}
        \underset{{\vx,y}}{\mathbb{E}}[\ell_{\bm{\gamma}}(f,\vx,y)] \leq \frac{1}{N}\sum_{i=1}^N\ell_{\bm{\gamma}}(f,\vx_i,y_i)+2\widehat{\mathfrak{R}}_\mathcal{D}(\mathcal{G}) +3\sqrt{\frac{\ln{\frac{2}{\delta}}}{2N}}
    \end{equation}
    Recall the definitions of the empirical and expected $\bm{\gamma}$-margin risks:
$\widehat{\mathcal{R}}_\mathcal{D}^{\bm{\gamma}}(f)=\frac{1}{N}\sum_{i=1}^N\ell_{\bm{\gamma}}(f,\vx_i,y_i)$ and   ${\mathcal{R}}^{\bm{\gamma}}(f)={\mathbb{E}}_{\vx,y}[\ell_{\bm{\gamma}}(f,\vx,y)]$. Using the fact that the 0-1 loss is upper-bounded by $\bm{\gamma}$-margin loss, we have $\mathcal{R}(f) \leq  {\mathcal{R}}^{\bm{\gamma}}(f)$. Thus, applying Eq.~(\ref{eq:naive-ineq}), we obtain
    \begin{equation}\label{eq:middle-bound}
        {\mathcal{R}}(f) \leq \widehat{\mathcal{R}}_\mathcal{D}^{\bm{\gamma}}(f) +2\widehat{\mathfrak{R}}_\mathcal{D}(\mathcal{G}) +3\sqrt{\frac{\ln{\frac{2}{\delta}}}{2N}}
    \end{equation}
Next, we aim to bound $\widehat{\mathfrak{R}}_\mathcal{D}(\mathcal{G})$ using $\widehat{\mathfrak{R}}_\mathcal{D}(\mathcal{F})$. 
$\widehat{\mathfrak{R}}_\mathcal{D}(\mathcal{G})$ can be written as:
\begin{align}
    \widehat{\mathfrak{R}}_\mathcal{D}(\mathcal{G})&=\frac{1}{N}\underset{\bm{\sigma}}{\mathbb{E}}\left[\sup_{f\in \mathcal{F}}\sum_{i=1}^N\sigma_i\ell_{\bm{\gamma}}(f,\vx_i,y_i)\right] \\
    &=\frac{1}{N}\underset{\bm{\sigma}}{\mathbb{E}}\left[\sup_{f\in \mathcal{F}}\sum_{i=1}^N\sigma_i\Phi_{\gamma_{y_i}}(\gamma_f(\vx_i,y_i))\right] \\
    &\leq  \frac{1}{N}\underset{\bm{\sigma}}{\mathbb{E}}\left[\sup_{f\in \mathcal{F}}\sum_{i=1}^N\sigma_i \frac{\gamma_f(\vx_i,y_i)}{\gamma_{y_i}}\right], \label{eq:Phi-Lipschitz}
\end{align}
where $\bm{\sigma}=(\sigma_i)_i$ with $\sigma_i$ being i.i.d Rademacher variables sampled uniformly from $\{-1,1\}$. Eq.(\ref{eq:Phi-Lipschitz}) holds from the $\frac{1}{\gamma}$-Lipschitz continuity
 of $\Phi_{\gamma}$. We define the function $\Psi_i$ as:
\begin{equation}
    \Psi_i(f) = \frac{\gamma_f(\vx_i,y_i)}{\gamma_{y_i}}=\frac{f(\vx_i,y_i)-\max_{y'\neq y_i}f(\vx_i,y')}{\gamma_{y_i}}.
\end{equation}
Then, for any $f,f' \in \mathcal{F}$, we have
\begin{align}
    |\Psi_i(f)-\Psi_i(f')|&=\frac{1}{\gamma_{y_i}}|({f(\vx_i,y_i)-\max_{y'\neq y_i}f(\vx_i,y')})-({f'(\vx_i,y_i)-\max_{y'\neq y_i}f'(\vx_i,y'}))| \\
    &\leq \frac{1}{\gamma_{y_i}}|\max_{y'\neq y_i} ({f(\vx_i,y_i)-f'(\vx_i,y_i)})+({f'(\vx_i,y')-f(\vx_i,y'}))| \label{eq:sub-add}\\
    &\leq \frac{2}{\gamma_{y_i}} \max_{y\in \mathcal{Y}}|f(\vx_i,y)-f'(\vx_i,y)| \\
    &\leq \frac{2}{\gamma_{y_i}} \sum_{y\in \mathcal{Y}}|f(\vx_i,y)-f'(\vx_i,y)| \\
    & \leq  \frac{2\sqrt{K}}{\gamma_{y_i}}\sqrt{\sum_{y\in\mathcal{Y}}|f(\vx_i,y)-f'(\vx_i,y)|^2} \label{eq:l1-l2}
\end{align}

Let $a_{y'}=f'(\vx_i,y')$, $b_{y'}=f(\vx_i,y')$, and $y^*=\arg\max_{y'\neq y} a_{y'}$, Eq.(\ref{eq:sub-add}) holds from the sub-additivity of the maximum:
\begin{equation}
    \max_{y'\neq y_i} f'(\vx,y')-\max_{y'\neq y_i}f(\vx_i,y')=a_{y^*}-\max_{y'\neq y_i}f(\vx_i,y') \leq a_{y^*} - b_{y^*} \leq \max_{y'\neq y_i}(a_{y'}-b_{y'}).
\end{equation}
We have Eq.(\ref{eq:l1-l2}) because of the $L_1$-$L_2$ norm inequality for $K$-dimensional $\vx$ vector: $\|\vx\|_1\leq \sqrt{K}\|\vx\|_2$. From Eq.(\ref{eq:l1-l2}), it is evident that $\Psi_i$ is $\frac{2\sqrt{K}}{\gamma_{y_i}}$-Lischitz with respect to the $\|\cdot\|_2$ norm. Thus, by the vector contraction lemma (Lemma 5 of~\cite{cortes2016structured}), we have:
\begin{equation}\label{eq:ineq-Rg-Rf}
    \widehat{\mathfrak{R}}_\mathcal{D}(\mathcal{G})\leq \frac{1}{N}\underset{\bm{\sigma}}{\mathbb{E}}\left[\sup_{f\in \mathcal{F}}\sum_{i=1}^N\sigma_i \Psi_i(f)\right]\leq \frac{2\sqrt{2K}}{N}\underset{\bm{\epsilon}}{\mathbb{E}}\left[\sup_{f\in\mathcal{F}} \left\{\sum_{k=1}^K\sum_{i\in I_k}\sum_{y\in \mathcal{Y}} \epsilon_{ij}\frac{f(\vx_i,y)}{\gamma_k}\right\}\right]=2\sqrt{2K} \widehat{\mathfrak{R}}_\mathcal{D}^{\bm{\gamma}}(\mathcal{F})
\end{equation}
Combining Eq.(\ref{eq:ineq-Rg-Rf}) and Eq.(\ref{eq:middle-bound}), we arrive at Eq.(\ref{eq:gamma-bound-appendix}).
\end{proof}

\subsection{Proof of Lemma~\ref{lemma:rademacher}}\label{sec:proof-lemma-rademacher}

\begin{restatedlemma}[Lemma~\ref{lemma:rademacher}: Bound on the empirical Rademacher complexity] For non-negative $\bm{\gamma} > \bm{0}$, consider $\mathcal{F}=\{(\vx,y) \mapsto \vw_y^\top \phi(\vx)\mid \vw_y \in \mathbb{R}^d, \|\vw_y\|_2 \leq \Lambda\}$. Let $\{\|\hat{\bm{\mu}}\|_2^2\}_{k=1}^K$ and $\{\|\hat{\mathbf{s}}\|_2^2\}_{k=1}^K$ denote the per-class squared norms of the feature means and the mean squared deviations, respectively, as defined in Section~\ref{sec:method}. Then, the following bound holds for all $f\in \mathcal{F}$:
\begin{equation}
    \widehat{\mathfrak{R}}^{\bm{\gamma}}_\mathcal{D}(\mathcal{F})\leq  \Lambda\sqrt{\frac{K}{N}}  \sqrt{\sum_{k=1}^K\frac{\|\hat{\bm{\mu}}_k\|_2^2+\|\hat{\mathbf{s}}_k\|_2^2}{\gamma_k^2}}
\end{equation}
\end{restatedlemma}

\begin{proof}
\begin{align}
    \widehat{\mathfrak{R}}_\mathcal{D}^{\bm{\gamma}}(\mathcal{F})&=\frac{1}{N}  \underset{\bm{\epsilon}}{\mathbb{E}} \left[\sup_{f\in \mathcal{F}}\left\{\sum_{k=1}^K \sum_{i\in I_k}\sum_{y\in\mathcal{Y}}\epsilon_{iy}\frac{f(\vx_i,y)}{\gamma_k}\right\} \right] \\
        &=\frac{1}{N}  \underset{\bm{\epsilon}}{\mathbb{E}} \left[\sup_{\|\vw_y\|_2\leq \Lambda}\left\{\sum_{k=1}^K \sum_{i\in I_k}\sum_{y\in\mathcal{Y}}\epsilon_{iy}\frac{\vw_y^\top \phi(\vx_i)}{\gamma_k}\right\} \right] \\
         &=\frac{1}{N}  \underset{\bm{\epsilon}}{\mathbb{E}} \left[\sup_{\|\vw_y\|_2\leq \Lambda}\left\{\sum_{y\in\mathcal{Y}} \vw_y^\top \sum_{k=1}^K \sum_{i\in I_k}\epsilon_{iy}\frac{ \phi(\vx_i)}{\gamma_k}\right\} \right] \\
         &\leq \frac{1}{N}  \underset{\bm{\epsilon}}{\mathbb{E}} \left[\sup_{\|\vw_y\|_2\leq \Lambda}\left\{\sum_{y\in\mathcal{Y}} \|\vw_y\|_2 \left\|\sum_{k=1}^K \sum_{i\in I_k}\epsilon_{iy}\frac{ \phi(\vx_i)}{\gamma_k}\right\|_2 \right\} \right] \label{eq:radema-cauchy}\\
         &= \frac{\Lambda}{N}  \underset{\bm{\epsilon}}{\mathbb{E}} \left[\sum_{y\in\mathcal{Y}}  \left\|\sum_{k=1}^K \sum_{i\in I_k}\epsilon_{iy}\frac{ \phi(\vx_i)}{\gamma_k}\right\|_2 \right] \\
         & = \frac{\Lambda}{N} \sum_{y\in \mathcal{Y}} \underset{\bm{\epsilon}}{\mathbb{E}} \left[   \left\|\sum_{k=1}^K \sum_{i\in I_k}\epsilon_{iy}\frac{ \phi(\vx_i)}{\gamma_k}\right\|_2 \right] \\
         & \leq \frac{\Lambda}{N} \sum_{y\in \mathcal{Y}} \sqrt{\underset{\bm{\epsilon}}{\mathbb{E}} \left[  \left\|\sum_{k=1}^K \sum_{i\in I_k}\epsilon_{iy}\frac{ \phi(\vx_i)}{\gamma_k}\right\|_2^2 \right]} \label{eq:jensen}\\
         & = \frac{\Lambda}{N} \sum_{y\in \mathcal{Y}} \sqrt{\underset{\bm{\epsilon}}{\mathbb{E}}   \left[\sum_{k,i} \sum_{k',i'}\epsilon_{iy}\epsilon_{i'y}\frac{ \phi(\vx_i)^\top\phi(\vx_{i'})}{\gamma_k\gamma_{k'}} \right]} \label{eq:radema-rvs}\\
         & = \frac{K\Lambda}{N}  \sqrt{\sum_{k=1}^K\frac{\sum_{i\in I_k}\|\phi(\vx_i)\|_2^2}{\gamma_k^2}} \\
         & = \Lambda\sqrt{\frac{K}{N}}  \sqrt{\sum_{k=1}^K\frac{\|\hat{\bm{\mu}}_k\|_2^2+\|\hat{\mathbf{s}}_k\|_2^2}{\gamma_k^2}} \label{eq:lemma-radema-final}
\end{align}
Eq.(\ref{eq:radema-cauchy}) holds by Cauchy-Schwarz inequality. We have Eq.(\ref{eq:jensen}) by Jensen's inequality, \ie, $\mathbb{E}[\|X\|_2] \leq \sqrt{\mathbb{E}[\|X\|_2^2]}$. Eq.(\ref{eq:radema-rvs}) follows using the fact that $\mathbb{E}[\epsilon_{i}\epsilon_{i'}]=0$ for $i\neq i'$ and  $\mathbb{E}[\epsilon_{i}\epsilon_{i}]=1$. Given the class-balanced setting, \ie, $|I_k|=\frac{N}{K}$,  Eq.(\ref{eq:lemma-radema-final}) holds by the following equality:
\begin{align}
    \sum_{i\in I_k} \|\phi(\vx_i)\|_2^2&=\sum_{i\in I_k} ((\phi(\vx_i) - \hat{\bm{\mu}}_k) + \bm{\hat{\mu}}_k)^\top ((\phi(\vx_i) - \bm{\hat{\mu}}_k) + \hat{\bm{\mu}}_k) \\
    &=\sum_{i\in I_k} [\|\phi(\vx_i) - \bm{\hat{\mu}}_k\|_2^2]+ \sum_{i\in I_k} [\|\hat{\bm{\mu}}_k\|_2^2] + 2  \underbrace{\sum_{i\in I_k} [(\phi(\vx_i) - \hat{\bm{\mu}}_k)^\top \hat{\bm{\mu}}_k)]}_{=0} \\
    &=|I_k|\cdot \frac{1}{|I_k|}\sum_{i\in I_k} [\|\phi(\vx_i) - \bm{\hat{\mu}}_k\|_2^2]+|I_k|\cdot\|\hat{\bm{\mu}}_k\|_2^2 \\
    &=\frac{N}{K}(\|\hat{\mathbf{s}}_k\|_2^2+\|\hat{\bm{\mu}}_k\|_2^2)
\end{align}
\end{proof}

\subsection{Proof of Lemma~\ref{lemma:risk-surrogate}}\label{sec:proof-risk-surrogate}

\begin{lemma}[\textbf{Log--sum--exp dominates the ramp}]\label{lem:ramp-lse}
For every $\gamma>0$ and all $u\in\mathbb R$,
\[
\Phi_\gamma(u)=\min(1,\max(0,1-\frac{u}{\gamma})) \leq
\log_2\bigl(1+\exp\{-\frac{u}{\gamma}\}\bigr).
\]
\end{lemma}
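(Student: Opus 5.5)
Substitute $t=u/\gamma$. Since $\gamma>0$, the claim reduces to the scale-free inequality
\[
h(t):=\min(1,\max(0,1-t))\leq g(t):=\log_2(1+e^{-t})\qquad\text{for all } t\in\mathbb{R}.
\]
The right-hand side $g$ is strictly positive and strictly decreasing. The left-hand side $h$ is piecewise linear. We therefore split into the three regimes where $h$ is constant or linear.

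\textbf{Case 1:} $t\geq 1$. Here $h(t)=0$, and $g(t)>0$ because $1+e^{-t}>1$. The inequality is immediate.

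\textbf{Case 2:} $t\leq 0$. Here $h(t)=1$. Since $e^{-t}\geq 1$, we get $1+e^{-t}\geq 2$, so $g(t)\geq \log_2 2=1$.

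\textbf{Case 3:} $t\in[0,1]$. This is the only substantive case. We must show $1-t\leq \log_2(1+e^{-t})$. The left side is linear in $t$. The right side is convex in $t$, because $\ln(1+e^{-t})$ is the softplus function evaluated at $-t$ and softplus is convex; dividing by $\ln 2$ preserves convexity. At the left endpoint the two sides agree, $h(0)=1=g(0)$. The tangent line of $g$ at $0$ has slope $g'(0)=-\frac{1}{\ln 2}\cdot\frac{e^{0}}{1+e^{0}}=-\frac{1}{2\ln 2}\approx -0.72$. A convex function lies above its tangent line, so
\[
g(t)\geq 1-\frac{t}{2\ln 2}\geq 1-t\qquad\text{for } t\geq 0,
\]
where the last step uses $\frac{1}{2\ln 2}<1$, i.e.\ $\ln 2>\tfrac12$. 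This settles Case 3.

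The main (mild) obstacle is Case 3. Endpoint checks alone are not enough there: convexity of $g$ is precisely what could let $g$ dip below the chord of $h$ between the endpoints. The tangent-line argument at $t=0$ avoids this, and it reduces the whole case to the single numerical fact $2\ln 2>1$. As a cross-check one could also verify $g(1)=\log_2(1+e^{-1})\approx 0.45>0=h(1)$, but the tangent bound already covers the entire interval. Combining the three cases gives $\Phi_\gamma(u)\leq\log_2(1+\exp\{-u/\gamma\})$ for all $u$. This pointwise bound is what will feed into Lemma~\ref{lemma:risk-surrogate}, after bounding $\log_2(1+e^{-\gamma_f/\gamma_y})$ by $\frac{1}{\ln 2}\ell_{\bm{\gamma},\mathsf{ce}}$ using Eq.~(\ref{eq:logit-loss-rewrite}).
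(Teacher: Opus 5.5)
Your proof is correct and follows essentially the same route as the paper: the same three-region split, with the middle case reducing to $2\ln 2>1$. Your tangent-line bound from convexity of $\log_2(1+e^{-t})$ is just the integrated form of the paper's step, which lower-bounds the derivative of $\log_2(1+e^{-t})-(1-t)$ by $1-\frac{1}{2\ln 2}>0$ using monotonicity of the sigmoid.
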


\begin{proof}
Let $g(u)$ denote the RHS: $g(u)=\log_2(1+\exp\{-\frac{u}{\gamma}\})$. It is strictly decreasing and convex because 
$g'(u)=-\frac{1}{\gamma\ln 2}\frac{\exp\{-u/\gamma\}}{1+\exp\{-u/\gamma\}}< 0$ and $g''(u)=\frac{1}{\gamma^2\ln 2}\frac{\exp\{-y/\gamma\}}{(1+\exp\{-u/\gamma\})^2}>0$.
We verify the inequality in three disjoint regions:
\begin{itemize}[leftmargin=1em]
    \item $u\geq \gamma$.  $\Phi_\gamma(u)=0$ by definition, while $g(u)>0$.  Hence the inequality holds.
    \item $u \leq 0$. Now $\Phi_\gamma(u)=1$. Since $g(u)$ is decreasing, $g(u)\ge g(0)= \log_2 2=1$.
    \item $0<u<\gamma$. In this interval, let $t=\frac{u}{\gamma} \in (0,1)$. We aim to prove that 
    \begin{equation}
        h(t)=\log_2(1+\exp\{-t\})-(1-t)=g(u)-\Phi_\gamma(u) \geq 0,\ \forall t \in (0,1).
    \end{equation}
     Let $\sigma(t)=\frac{1}{1+e^{-t}}$ be the sigmoid function, the derivative of $h(t)$ is:
    \begin{equation}
        h'(t)=\frac{1}{\ln2}\frac{-e^{-t}}{1+e^{-t}}+1= (1 - \frac{1}{\ln 2}) + \frac{\sigma(t)}{\ln 2}
    \end{equation}
    Since $\sigma(t)$ is monotonically increasing, $h'(t) > (1-\frac{1}{\ln 2})+\frac{\sigma(0)}{\ln 2}=1-\frac{1}{2\ln2}\approx 0.279 > 0$. Therefore, $h(t)$ is strictly increasing for $t \in (0,1)$. Since $h(0)=\log_2(2)-1=0$, we have $h(t)>0,\ \forall t \in (0,1)$
\end{itemize}

Combining the three regions proves the claim.
\end{proof}

\begin{restatedlemma}[Lemma~\ref{lemma:risk-surrogate}]
    For non-negative $\bm{\gamma} > \mathbf{0}$. Let $\widehat{\mathcal{R}}_\mathcal{D}^{\bm{\gamma}}(f)=\frac{1}{N}{\sum}_{\vx,y\in\mathcal{D}}[\ell_{\bm{\gamma}}(f,\vx,y)]$ and $\widehat{\mathcal{R}}_\mathcal{D}^{\bm{\gamma},\mathsf{ce}}(f)=\frac{1}{N}{\sum}_{\vx,y\in\mathcal{D}}[\ell_{\bm{\gamma},\mathsf{ce}}(f,\vx,y)]$. Then, the following bound holds for all $f\in \mathcal{F}$:
    \begin{equation}
        \widehat{\mathcal{R}}_\mathcal{D}^{\bm{\gamma}}(f) \leq \frac{1}{\ln 2}\widehat{\mathcal{R}}_\mathcal{D}^{\bm{\gamma},\mathsf{ce}}(f)
    \end{equation}
\end{restatedlemma}

\begin{proof} Let $\vz=[f(\vx,1),...,f(\vx,K)]^\top$ denote the  logit vector. 
    Using our Lemma~\ref{lem:ramp-lse} that $\Phi_\gamma(u)\leq \log_2(1+\exp\{-\frac{u}{\gamma}\})$, and the fact that $\log_2(a) = \ln (a)/ \ln(2)$
    \begin{equation}
                \ell_{\bm{\gamma}}(f,\vx,y)  = \Phi_{\gamma_y} (\gamma_f(\vx,y)) 
         \leq \log_2 \left[1+\exp(-\frac{\gamma_f(\vx,y)}{\gamma_y})\right] = \frac{1}{\ln 2}\ln \left[1+\exp(-\frac{\gamma_f(\vx,y)}{\gamma_y})\right]
    \end{equation}
Because $\max_j a_j \leq \ln \sum_j \exp(a_j)$, we have
\begin{equation}
    \ln\exp(-\frac{\gamma_f(\vx,y)}{\gamma_y})=-\frac{\gamma_f(\vx,y)}{\gamma_y}=\max_{k\neq y}[\frac{\vz_k-\vz_y}{\gamma_y}] \leq \ln \left[\sum_{k\neq y}\exp(-\frac{\vz_y-\vz_k}{\gamma_y}) \right]
\end{equation}
Adding $1$ inside the logarithm on both sides yields
\begin{align}
    \ell_{\bm{\gamma}}(f,\vx,y) &\leq \frac{1}{\ln 2} \ln\left[1+\exp(-\frac{\gamma_f(\vx,y)}{\gamma_y})\right]\\
    &\leq \frac{1}{\ln 2} \ln \left[1+\sum_{k\neq y}\exp(-\frac{\vz_y-\vz_k}{\gamma_y}) \right] \\
    &= \frac{1}{\ln 2} \ell_{\bm{\gamma},\mathsf{ce}}(f,\vx,y)
\end{align}
Since $\ell_{\bm{\gamma}}$ is point-wise upper-bounded by $\frac{1}{\ln 2}\ell_{\bm{\gamma},\mathsf{ce}}$, it follows immediately that 
\begin{equation}
    \widehat{\mathcal{R}}_\mathcal{D}^{\bm{\gamma}}(f) \leq \frac{1}{\ln 2} \widehat{\mathcal{R}}_\mathcal{D}^{\bm{\gamma},\mathsf{ce}}(f).
\end{equation} 
\end{proof}

\subsection{Proof of Corollary~\ref{corollary:optimal-gamma}}\label{sec:proof-corollary}

\begin{restatedcorollary}[Corollary~\ref{corollary:optimal-gamma} Optimal choice of $\bm{\gamma}$] For fixed $\bar{c}$, the complexity term of the bound is minimized when 
    \begin{equation}
        \gamma_y = \frac{\bar{c}K(\|\hat{\bm\mu}_y\|_2^2+\|\hat{\mathbf{s}}_y\|_2^{2})^{\frac{1}{3}} }{\sum_{k=1}^K(\|\hat{\bm\mu}_k\|_2^2+\|\hat{\mathbf{s}}_k\|_2^{2})^{\frac{1}{3}} } \quad \text{for all } y \in [K].
    \end{equation}
\end{restatedcorollary}

\begin{proof}
To minimize the complexity term in Eq.~(\ref{eq:final-proposition}) under the total margin constraint $c$, we solve the following constrained optimization problem:
\begin{equation}\label{eq:optimization}
    \bm{\gamma}=\argmin_{\bm{\gamma}=[\gamma_k]}  \sum_{k=1}^K \frac{\|\hat{\bm{\mu}}_k\|_2^2+\|\hat{\mathbf{s}}_k\|_2^2}{\gamma_k^2}\quad 
    \mathrm{s.t.}\  \sum_{k\in [K]} \gamma_k = c.
\end{equation}
Denote $\alpha_k=\|\hat{\bm\mu}_k\|_2^2+\|\hat{\mathbf{s}}_k\|_2^{2}>0$. 
It is equivalent to solve the following Lagrangian problem with multiplier $\upsilon\in\mathbb{R}$.
\begin{equation}
    \min_{\bm{\gamma}} \max_\upsilon \sum_{k=1}^K \frac{\alpha_k}{\gamma_k^2} + \upsilon (\sum_{k\in [K]} \gamma_k - c)
\end{equation}
Setting the partial derivatives w.r.t. $\gamma_y$ to zero gives
\begin{equation}
-\,\frac{2\alpha_y}{\gamma_y^{3}}+\upsilon = 0
\;\;\Longrightarrow\;\;
\gamma_y = (\frac{2\alpha_y}{\upsilon})^{\frac{1}{3}}.
\end{equation}
Thus every optimal $\gamma_y$ is proportional to $\alpha_y^{1/3}$.
Since $\frac{1}{K}\sum_k \gamma_k=\bar{c}$, we have 
\begin{equation}
    \bar{c}K=(2\frac{\sum_k\alpha_k}{\upsilon})^{\frac{1}{3}} 
    \;\;\Longrightarrow\;\;
\upsilon^{\frac{1}{3}}=\frac{(2\sum_k\alpha_k)^{\frac{1}{3}}}{\bar{c}K}
\end{equation}
It follows directly that:
\begin{equation}
    \gamma_y=\frac{\bar{c}K\alpha_y^{\frac{1}{3}}}{\sum_k\alpha_k^{\frac{1}{3}}}
\end{equation}
\end{proof}

\subsection{Proof of Proposition~\ref{thm:lp-lq-rad}}\label{sec:proof-general-bound}
We begin with lemmas leading to the proposition of our general bound.

\begin{lemma}\label{lemma:sum-sub-gaussian} If $X_1,...,X_n$ are independent sub-Gaussian r.v.s with variance proxies $\sigma^2_1,...,\sigma^2_n$, then $Z=\sum_{i=1}^nX_i$ is sub-Gaussian with variance proxy $\sum_{i=1}^n\sigma^2_i$. 
\end{lemma}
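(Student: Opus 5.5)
The plan is to go through moment generating functions, using the definition of a sub-Gaussian random variable with variance proxy $\sigma^2$: a centered r.v.\ $X$ with $\mathbb{E}[X]=0$ and
\[
\mathbb{E}[e^{\lambda X}]\le \exp(\lambda^2\sigma^2/2)\quad\text{for all }\lambda\in\mathbb{R}.
\]
The argument needs only two ingredients: linearity of expectation for centering, and factorization of the MGF under independence.

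\textbf{Step 1 (centering).} Each $X_i$ has mean zero, so by linearity $\mathbb{E}[Z]=\sum_i\mathbb{E}[X_i]=0$, and $Z$ is centered.

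\textbf{Step 2 (MGF bound).} Fix an arbitrary $\lambda\in\mathbb{R}$. Since $X_1,\dots,X_n$ are independent, so are the random variables $e^{\lambda X_1},\dots,e^{\lambda X_n}$. Hence the expectation of their product factorizes:
\[
\mathbb{E}[e^{\lambda Z}]=\mathbb{E}\Big[\prod_{i=1}^n e^{\lambda X_i}\Big]=\prod_{i=1}^n\mathbb{E}[e^{\lambda X_i}].
\]
Applying the sub-Gaussian MGF bound to each factor gives
\[
\prod_{i=1}^n\mathbb{E}[e^{\lambda X_i}]\le\prod_{i=1}^n\exp(\lambda^2\sigma_i^2/2)=\exp\Big(\tfrac{\lambda^2}{2}\sum_{i=1}^n\sigma_i^2\Big).
\]

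\textbf{Step 3 (conclusion).} The bound from Step 2 holds for every $\lambda\in\mathbb{R}$, and $Z$ is centered by Step 1. This is exactly the definition of $Z$ being sub-Gaussian with variance proxy $\sum_{i=1}^n\sigma_i^2$.

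The only point needing care is the convention for "sub-Gaussian." Under the MGF definition above, the argument is immediate and there is no real obstacle. If one instead uses a tail-based definition, the variance proxies are only preserved up to absolute constants. I will therefore fix the MGF definition, which is also the form needed later for Rademacher sums $\sum_i\epsilon_i a_i$, since each $\epsilon_i a_i$ has variance proxy $a_i^2$ by Hoeffding's lemma.
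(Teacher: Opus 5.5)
Your proof is correct: centering follows by linearity, the MGF factorizes under independence, and the per-factor bounds multiply to $\exp\bigl(\tfrac{\lambda^2}{2}\sum_i\sigma_i^2\bigr)$ for every $\lambda$, which is exactly the definition you fixed. The paper states this lemma without proof as a standard fact, and your MGF-factorization argument is the standard justification, so there is no different route to compare against.
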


\begin{lemma}\label{lemma:maximal}(\textbf{Maximal Inequality}~\cite{maximl-inequality})
    Suppose $X_1,...,X_n$ are sub-Gaussian r.v.s with variance proxy $\sigma^2$. Then
    \begin{equation}
        \mathbb{E}[\underset{i \in [n]}{\max}\ X_i] \leq \sqrt{2\sigma^2\log(n)}
    \end{equation}
\end{lemma}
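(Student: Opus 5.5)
The plan is the standard Chernoff/softmax argument. I will use the definition of sub-Gaussianity implicit in Lemma~\ref{lemma:sum-sub-gaussian}: $X$ is sub-Gaussian with variance proxy $\sigma^2$ if $\mathbb{E}[\exp(\lambda X)]\leq \exp(\lambda^2\sigma^2/2)$ for all $\lambda\in\mathbb{R}$. This forces $\mathbb{E}[X]=0$, by expanding both sides to first order in $\lambda$ near $0$. Independence of the $X_i$ is \emph{not} needed, which is fortunate, since the downstream application will take maxima over coordinates that may be dependent.

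First I dispose of the degenerate case $n=1$. There the right-hand side is $0$, and $\mathbb{E}[X_1]=0$ by the centering remark, so the bound holds with equality.

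For $n\geq 2$, fix any $\lambda>0$. By Jensen's inequality applied to the convex map $t\mapsto e^{\lambda t}$, and then bounding the maximum of nonnegative terms by their sum,
\begin{equation*}
\exp\Bigl(\lambda\,\mathbb{E}\bigl[\max_{i\in[n]} X_i\bigr]\Bigr)\leq \mathbb{E}\Bigl[\max_{i\in[n]} e^{\lambda X_i}\Bigr]\leq \sum_{i=1}^n \mathbb{E}\bigl[e^{\lambda X_i}\bigr]\leq n\,e^{\lambda^2\sigma^2/2}.
\end{equation*}
Taking logarithms and dividing by $\lambda$ gives
\begin{equation*}
\mathbb{E}\bigl[\max_i X_i\bigr]\leq \frac{\log n}{\lambda}+\frac{\lambda\sigma^2}{2}.
\end{equation*}

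The last step is to optimize over $\lambda$. The choice $\lambda=\sqrt{2\log(n)/\sigma^2}$ is positive because $n\geq 2$. It balances the two terms, each becoming $\sqrt{\sigma^2\log(n)/2}$, so their sum is $\sqrt{2\sigma^2\log n}$, as claimed.

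There is no real obstacle here. The only points needing care are stating the sub-Gaussian definition precisely, including the centering that handles $n=1$, and noting that the union-type bound $\max\leq\sum$ avoids any independence assumption.
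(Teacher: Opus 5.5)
Your argument is correct, and it is the standard proof of this maximal inequality: Jensen's inequality, then the bound $\max\le\sum$, then optimizing over $\lambda$, exactly as in the source the paper cites. The paper gives no proof of its own and simply imports the result by citation. The one case you leave unstated is $\sigma^2=0$, where your choice of $\lambda$ is undefined. Letting $\lambda\to\infty$ in $\log(n)/\lambda+\lambda\sigma^2/2$ handles it.
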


\begin{restatedproposition}[\textbf{Proposition~\ref{thm:lp-lq-rad}: General bound on the $\bm{\gamma}$-margin Rademacher complexity}]
Fix conjugate exponents $p,q\in[1,\infty]$ such that $1/p+1/q=1$.
Let $\mathcal{F}_q=\{(x,y)\mapsto \vw_y^\top\phi(\vx) \mid
      \vw_y \in \mathbb{R}^{d}, \|\vw_y\|_{q}\le \Lambda_q\}.$
For each class $k\in[K]$, we write the average $L_p^2$ norm as
\begin{equation}
    r_{k,p}^2=\frac{1}{|I_k|}
\sum_{i\in I_k}\|\phi(\vx_i)\|_{p}^2.
\end{equation}
Then, for all $f\in \mathcal{F}_q$, the empirical $\bm{\gamma}$-margin Rademacher complexity satisfies
\begin{equation}
    \widehat{\mathfrak{R}}_\mathcal{D}^{\bm{\gamma}}(\mathcal{F}_q) \leq 
\begin{cases}
\sqrt{2\log(d)}\Lambda_q\sqrt{\frac{K}{N}} \sqrt{\sum_{k=1}^K\frac{r_{k,\infty}^2}{\gamma_k^2}} & \text{if } p=\infty , \\
2^{\frac{p}{2}}\frac{\Gamma(\frac{p+1}{2})}{\sqrt{\pi}} \Lambda_q\sqrt{\frac{K}{N}} \sqrt{\sum_{k=1}^K\frac{r_{k,p}^2}{\gamma_k^2}} & \text{if } 2 < p < \infty, \\
\Lambda_q\sqrt{\frac{K}{N}} \sqrt{\sum_{k=1}^K\frac{r_{k,p}^2}{\gamma_k^2}}  & \text{if }  1 \leq p \leq 2,
\end{cases}
\end{equation}
where $\Gamma$ is the gamma function.
\end{restatedproposition}

\begin{proof}

Using H\"{o}lder's inequality and the bound on $\|\vw\|_p$, we have
\begin{align}
    \widehat{\mathfrak{R}}_\mathcal{D}^{\bm{\gamma}}(\mathcal{F}_q)&=\frac{1}{N}  \underset{\bm{\epsilon}}{\mathbb{E}} \left[\sup_{f\in \mathcal{F}_q}\left\{\sum_{k=1}^K \sum_{i\in I_k}\sum_{y\in\mathcal{Y}}\epsilon_{iy}\frac{f(\vx_i,y)}{\gamma_k}\right\} \right] \\
         &=\frac{1}{N}  \underset{\bm{\epsilon}}{\mathbb{E}} \left[\sup_{\|\vw_y\|_q\leq \Lambda}\left\{\sum_{y\in\mathcal{Y}} \vw_y^\top \sum_{k=1}^K \sum_{i\in I_k}\epsilon_{iy}\frac{ \phi(\vx_i)}{\gamma_k}\right\} \right] \\
         &\leq \frac{1}{N}  \underset{\bm{\epsilon}}{\mathbb{E}} \left[\sup_{\|\vw_y\|_q\leq \Lambda}\left\{\sum_{y\in\mathcal{Y}} \|\vw_y\|_q \left\|\sum_{k=1}^K \sum_{i\in I_k}\epsilon_{iy}\frac{ \phi(\vx_i)}{\gamma_k}\right\|_p \right\} \right] \\
         &= \frac{K\Lambda}{N}  \underset{\bm{\epsilon}}{\mathbb{E}} \left[ \left\|\sum_{k=1}^K \sum_{i\in I_k}\epsilon_{i}\frac{ \phi(\vx_i)}{\gamma_k}\right\|_p \right] 
\end{align}

We use the shorthand $\vu$ as
\begin{equation}
    \vu=\sum_{k=1}^K \sum_{i\in I_k}\epsilon_{i}\frac{ \phi(\vx_i)}{\gamma_k} \in \mathbb{R}^d, 
\end{equation}
and $u_j$ as the $j$-th coordinate of $\vu$:
\begin{equation}
    u_j=\sum_{k=1}^K \sum_{i\in I_k}\epsilon_{i}\frac{ \phi(\vx_i)_j}{\gamma_k}, 
\end{equation}

Using Lemma~\ref{lemma:sum-sub-gaussian} and the fact that a Rademacher variable is $1$-sub-Gaussian, the proxy variance $\sigma_j^2$ of $u_j$ is
\begin{equation}
    \sigma_j^2=\sum_{k=1}^K \frac{\sum_{i\in I_k}\phi(\vx_i)_j^2}{\gamma_k^2} \leq \sum_{k=1}^K \frac{\sum_{i\in I_k}\|\phi(\vx_i)\|^2_p}{\gamma_k^2} = \frac{N}{K}\sum_{k=1}^K \frac{r_{k,p}^2}{\gamma_k^2} 
\end{equation}
The inequality holds for any $p \geq 1$, as for any vector $\vv \in \mathbb{R}^d$ and $j \in [d]$, we have $|\vv_j|\leq \|\vv\|_p$.

\noindent\textbf{Proof of upper bound, case $p=\infty$.}

\begin{align}
    \widehat{\mathfrak{R}}_\mathcal{D}^{\bm{\gamma}}(\mathcal{F}_1) &\leq \frac{K\Lambda}{N}\underset{\bm{\epsilon}}{\mathbb{E}}[\|\vu\|_\infty] \\
    &=\frac{K\Lambda}{N}\underset{\bm{\epsilon}}{\mathbb{E}} \left[\underset{j \in [d], s\in \{-1,+1\}}{\max}su_j\right]
\end{align}

Thus, by Maximal Inequality Lemma (Lemma~\ref{lemma:maximal}),

\begin{align}
    \widehat{\mathfrak{R}}_\mathcal{D}^{\bm{\gamma}}(\mathcal{F}_1) &\leq \sqrt{2\log(d)}\Lambda\sqrt{\frac{K}{N}} \sqrt{\sum_{k=1}^K\frac{r_{k,\infty}^2}{\gamma_k^2}}
\end{align}

\noindent\textbf{Proof of upper bound, case $1  \leq p < \infty$.}

By Khintchine’s inequality~\cite{Haagerup1981}, the following holds:
\begin{equation}
    \underset{\bm{\epsilon}}{\mathbb{E}}[\|\vu\|_p] = \left[\sum_{j\in [d]}\mathbb{E}|u_j|^p\right]^{\frac{1}{p}}\leq C_p (\sum_{j\in [d]} \sigma_j^2)^{\frac{1}{2}}=C_p \sqrt{ \frac{N}{K}\sum_{k=1}^K\frac{r_{k,p}^2}{\gamma_k^2}} 
\end{equation}
Therefore, we have:
\begin{align}
    \widehat{\mathfrak{R}}_\mathcal{D}^{\bm{\gamma}}(\mathcal{F}_q) &\leq C_p\Lambda\sqrt{\frac{K}{N}} \sqrt{\sum_{k=1}^K\frac{r_{k,p}^2}{\gamma_k^2}},
\end{align}
where $C_p=1$ for $p \leq 2$ , and 
\begin{equation}
    C_p = 2^{\frac{p}{2}}\frac{\Gamma(\frac{p+1}{2})}{\sqrt{\pi}}
\end{equation}
 for $ p > 2$.
\end{proof}

\subsection{Per-Class Error Bound}\label{appsec:per-class-error}
We bound the per-class error $\mathcal{R}_k$  for class $k$ w.p. at least $1-\delta$:  
\begin{equation}
    \mathcal{R}_k(f)\leq \tfrac{1}{\ln 2}\widehat{\mathcal{R}}_{\mathcal{D}_k}^{\bm{\gamma},\mathsf{ce}} + \tfrac{4}{\gamma_k}\widehat{\mathfrak{R}}_k(\mathcal{F}) + \varepsilon(\delta, N),
\end{equation}
where $\widehat{\mathfrak{R}}_k \propto \sqrt{\|\hat{\bm\mu}_k\|_2^2+\|\hat{\mathbf{s}}_k\|_2^2}$ and $\epsilon$ is a low-order term.
It suffices to express $\widehat{\mathfrak{R}}_k(\mathcal{F})$ as the empirical Rademacher complexity restricted to class $k$:
\begin{equation}
    \widehat{\mathfrak{R}}_k(\mathcal{F})
    = \frac{1}{|I_k|}\,\mathbb{E}_{\bm{\epsilon}}
    \left[ \sup_{f\in\mathcal{F}} 
    \sum_{i\in I_k}\epsilon_i f(\vx_i,y_i)\right],
\end{equation}
where $\bm{\epsilon}$ are Rademacher variables. 

\begin{proof}
By definition,
\begin{align}
\widehat{\mathfrak{R}}_k(\mathcal{F}_k)
&= \frac{1}{N_k}\, \mathbb{E}_{\bm{\epsilon}}
\left[ \sup_{\|\vw\|_2 \leq \Lambda}
\sum_{i \in I_k} \epsilon_i \vw^\top \phi(\vx_i) \right] \\
&= \frac{1}{N_k}\, \mathbb{E}_{\bm{\epsilon}}
\left[ \sup_{\|\vw\|_2 \leq \Lambda}
\vw^\top \sum_{i \in I_k} \epsilon_i \phi(\vx_i) \right].
\end{align}

Applying Cauchy–Schwarz,
\begin{align}
\widehat{\mathfrak{R}}_k(\mathcal{F}_k)
&\leq \frac{\Lambda}{N_k}\, \mathbb{E}_{\bm{\epsilon}}
\left[\left\| \sum_{i \in I_k} \epsilon_i \phi(\vx_i) \right\|_2 \right].
\end{align}

Now note that each coordinate of 
\(
u = \sum_{i \in I_k} \epsilon_i \phi(\vx_i)
\)
is a Rademacher sum with variance proxy
\(
\sum_{i \in I_k} \phi(\vx_i)_j^2
\).
Thus,
\begin{align}
\mathbb{E}_{\bm{\epsilon}}\|u\|_2^2
&= \sum_{j=1}^d \mathbb{E}\Big[\Big(\sum_{i \in I_k} \epsilon_i \phi(\vx_i)_j\Big)^2\Big] \\
&= \sum_{j=1}^d \sum_{i \in I_k} \phi(\vx_i)_j^2 \\
&= \sum_{i \in I_k}\|\phi(\vx_i)\|_2^2.
\end{align}

Taking square roots,
\begin{align}
\widehat{\mathfrak{R}}_k(\mathcal{F}_k)
&\leq \frac{\Lambda}{N_k} 
\sqrt{\sum_{i \in I_k}\|\phi(\vx_i)\|_2^2}.
\end{align}

Finally, decomposing into squared mean and variance gives
\[
\frac{1}{N_k}\sum_{i \in I_k}\|\phi(\vx_i)\|_2^2
= \|\hat{\bm\mu}_k\|_2^2 + \|\hat{\mathbf{s}}_k\|_2^2.
\]
Therefore,
\[
\widehat{\mathfrak{R}}_k(\mathcal{F}) \;\leq\; 
\Lambda \sqrt{\|\hat{\bm\mu}_k\|_2^2 + \|\hat{\mathbf{s}}_k\|_2^2}.
\]
\end{proof}
\section{Additional Experimental Details and Results}\label{appsec:exp}

\subsection{Additional Details of Baseline Methods}\label{appsec:baseline-details}
We adopt the official implementations of the baselines to ensure fair comparison, with the exception of CSR~\cite{jin2025enhancing}, DRO~\cite{Sagawa2020Distributionally}, and FairDRO~\cite{jung2023reweighting}, where differences in settings require modifications, as described below.
CSR is designed to enhance robust fairness under adversarial attacks. To align it with our clean-data training setting, we remove the adversarial perturbations during training. DRO and FairDRO target worst-group performance. In our setting without group labels, we substitute group labels with class labels. 

\subsection{Additional Implementation Details}\label{appsec:imple-details}

\noindent\textbf{Implementation details for CIFAR-100.} Following~\cite{cao2019learning}, we use simple data augmentation: each image is padded by $4$ pixels on all sides, then a $32 \times 32$ crop is randomly sampled from the padded image or its horizontal flip. The model is trained using stochastic gradient descent with momentum 0.9, weight decay $5 \times 10^{-4}$, a batch size of 128, and for 300 epochs. The initial learning rate is set to 0.1 and follows a cosine decay schedule. Experiments are performed using a single NVIDIA A100-40GB GPU.

\zbe{
\noindent\textbf{Implementation details for training from scratch on ImageNet.} We train the ResNet-50 in 300 epochs with a cosine and batch size of 128 on 8 NVIDIA A100-40GB GPUs. The initial learning rate is set to $0.1$ and the weight decay is set to $10^{-4}$. We use the SGD optimizer with a momentum of $0.9$.

}

\noindent\textbf{Implementation details for CLIP ResNet-50 and CLIP ViT-B/32.}
Following the fine-tuning configurations of WiSE-FT~\cite{Wortsman_2022_CVPR}, we use the default PyTorch AdamW optimizer for 10 epochs with weight decay of 0.1. We set an initial learning rate of $3\times 10^{-5}$ and follows a cosine decay schedule with 500 warm-up steps. We use a batch size of 512 and minimal data augmentation as in~\cite{radford2021learning,Wortsman_2022_CVPR}. Since CLIP models are cosine classifier models, we set $p=3$ for $L_p$ norm. Experiments are performed using four NVIDIA A100-40GB GPUs. 

\noindent\textbf{Implementation details for MAE ViT-B/16.} Following the fine-tuning setup of~\cite{he2022masked}, we use the default AdamW optimizer with base learning rate of $10^{-3}$ for 100 epochs. The batch size is set to 1024 and cosine decay learning schedule with 5 warmup epochs. For fair comparison, we avoid advanced augmentation strategies (\eg, RandAugment, CutMix and AutoAug) and use only standard augmentations such as random crop and flip. Experiments are performed using eight NVIDIA A100-40GB GPUs.

\noindent\textbf{Implementation details for MoCov2 ResNet-50.} Using a pretrained MoCov2~\cite{he2020momentum} backbone, we train a supervised linear classifier on frozen features. The classifier is optimized with SGD using a learning rate of 30, momentum of 0.9, and a batch size of 256. Training runs for 100 epochs, with the learning rate decayed by a factor of 0.1 at epochs 60 and 80. We apply standard data augmentations, including random cropping and horizontal flipping. Experiments are performed using a single NVIDIA A100-40GB GPU.

\subsection{Additional Experimental Results}\label{appsec:add-exps}

\begin{table}[t]
    \captionof{table}{Effect of different $L_p$ norms on CIFAR-100 with ResNet-32. We observe consistent improvements across all metrics for different $p$ values.}  \label{tab:norm-cifar}
    \centering
    \tabstyle{8pt}
    \begin{tabular}{l|ccccc}
    \toprule
    \rowcolor{white}
      $p$      & 1      & 3& 5 & 7 & 9        \\ \midrule 
      Overall & 73.4 {\scriptsize \textbf{(+2.6)}} & 	73.7 {\scriptsize \textbf{(+2.9)}}& 73.9 {\scriptsize \textbf{(+3.1)}}&73.7 {\scriptsize \textbf{(+2.9)}}& 	73.2 {\scriptsize \textbf{(+2.4)}} \\
      Easy    & 85.8 {\scriptsize \textbf{(+1.3)}}&	85.9  {\scriptsize \textbf{(+1.4)}} & 85.9  {\scriptsize \textbf{(+1.4)}}	&85.9  {\scriptsize \textbf{(+1.4)}}	&	85.8  {\scriptsize \textbf{(+1.3)}}\\
      Medium  & 73.5 {\scriptsize \textbf{(+2.5)}}&   73.7 {\scriptsize \textbf{(+2.7)}} & 73.8 {\scriptsize \textbf{(+2.8)}}& 73.6 {\scriptsize \textbf{(+2.6)}}	&	73.4 {\scriptsize \textbf{(+2.4)}}\\
              \rowcolor{lightCyan}
      Hard    & 60.7 {\scriptsize \textbf{(+4.0)}} &  61.4 {\scriptsize \textbf{(+4.7)}}& 61.9 {\scriptsize \textbf{(+5.2)}}	& 61.5 {\scriptsize \textbf{(+4.8)}} & 60.2 {\scriptsize \textbf{(+3.5)}}\\
    \bottomrule
    \end{tabular}
\end{table}

\begin{table}[t]
    \captionof{table}{Effect of different norms on ImageNet with CLIP ResNet-50. For the cosine classifier models, we note consistent improvement across all metrics for different $p$ values.}\label{tab:norm-imagenet}
    \centering
    \tabstyle{8pt}
    \begin{tabular}{l|ccccc}
    \toprule
    \rowcolor{white}
      $p$      & 1      & 3& 5 & 7 & 9        \\ \midrule 
      Overall & 76.5 {\scriptsize \textbf{(+0.9)}} & 77.1 {\scriptsize \textbf{(+1.5)}}	& 	76.9 {\scriptsize \textbf{(+1.3)}} & 77.1 {\scriptsize \textbf{(+1.5)}}& 76.8 {\scriptsize \textbf{(+1.2)}}	\\
      Easy    & 91.6 {\scriptsize \textbf{(+0.2)}} & 91.7 {\scriptsize \textbf{(+0.3)}}	 &91.5 {\scriptsize \textbf{(+0.1)}}	&91.7 {\scriptsize \textbf{(+0.3)}}		&	91.5 {\scriptsize \textbf{(+0.1)}}	\\
      Medium  & 79.3 {\scriptsize \textbf{(+0.9)}} &79.9 {\scriptsize \textbf{(+1.5)}} &  79.6 {\scriptsize \textbf{(+1.2)}}	 & 	79.9 {\scriptsize \textbf{(+1.5)}}		& 	79.4 {\scriptsize \textbf{(+1.0)}}	\\
              \rowcolor{lightCyan}
      Hard    & 58.8 {\scriptsize \textbf{(+1.8)}} &  59.8 {\scriptsize \textbf{(+2.8)}} & 60.0 {\scriptsize \textbf{(+3.0)}}	&59.7 {\scriptsize \textbf{(+2.7)}} & 59.6 {\scriptsize \textbf{(+2.6)}}\\
    \bottomrule
    \end{tabular}
\end{table}

\begin{table}[t]
    \captionof{table}{Fairness improvements on StanfordCars, OxfordPets, Flowers, Food and FGVCAircraft. }\label{tab:other-datasets}
    \centering
    \tabstyle{8pt}
    \begin{tabular}{ll|ccccc}
    \toprule
    & Metric & Cars        & Pets   &  Flowers   & Food & Aircraft        \\ 
    \midrule
\multirow{4}{*}{\shortstack{CLIP\\RN-50}}
    &Overall & 75.8 & 89.2  & 96.7  & 81.4 & 34.5 \\ 
    &Easy    & 92.0 & 97.4	& 100   & 91.0 & 65.7\\
    &Medium  & 77.9	& 90.6	& 99.3  & 82.9 & 29.4 \\ 
        \rowcolor{lightCyan}
    & Hard   & 57.4 & 79.7  & 90.9	& 70.2 & 8.4 \\ \midrule
    \multirow{4}{*}{\shortstack{+ \ours}}
    &Overall & 76.4 {\scriptsize\textbf{(+0.6)}} & 90.4 {\scriptsize\textbf{(+1.2)}} & 97.3 {\scriptsize\textbf{(+0.6)}}  & 	81.9 {\scriptsize\textbf{(+0.5)}} & 35.5  {\scriptsize\textbf{(+1.0)}} \\ 
    &Easy    & 92.2 {\scriptsize\textbf{(+0.2)}} & 97.8 {\scriptsize\textbf{(+0.4)}} & 	100 {\scriptsize\textbf{(+0.0)}} & 	91.1 {\scriptsize\textbf{(+0.1)}}& 	65.8  {\scriptsize\textbf{(+0.1)}}	\\
    &Medium  & 78.4 {\scriptsize\textbf{(+0.5)}} & 91.7 {\scriptsize\textbf{(+1.1)}} &99.9 {\scriptsize\textbf{(+0.6)}} & 	83.3 {\scriptsize\textbf{(+0.4)}} & 	30.0  {\scriptsize\textbf{(+0.6)}} \\ 
        \rowcolor{lightCyan}
    & Hard   & 58.7 {\scriptsize\textbf{(+1.3)}} & 81.6 {\scriptsize\textbf{(+1.9)}} &  92.2 {\scriptsize\textbf{(+1.3)}} & 71.3 {\scriptsize\textbf{(+1.1)}} &  10.8  {\scriptsize\textbf{(+2.4)}}\\ 
    \bottomrule
    \end{tabular}
\end{table}

\begin{table}[t]
\centering
\caption{Additional main component analysis. ``RN'', ``P'', ``C'' and ``M'' stand for ResNet, PreAct, CLIP and MAE, respectively.}
\tabstyle{3pt}
\begin{subtable}[t]
{.48\textwidth}\caption{\textbf{CIFAR-100}}
    \centering
   \scalebox{0.9}{
    \begin{tabular}{ll|cccb}
    \toprule
    \rowcolor{white}
    &Model        & Overall   &  Easy   & Medium & Hard        \\ 
    \midrule
    \multirow{3}{*}{\rotatebox{90}{{RN-20}}}
    &$\ell_\mathsf{ce}$    & 68.7 & 83.3 & 70.3  & 53.0     \\
&$\ell_{\bm{\gamma},\mathsf{ce}}$   & 70.3   & 84.1  {\scriptsize\textbf{(+0.8)}}   &  71.7 {\scriptsize \textbf{(+1.4)}}  & 55.8 {\scriptsize \textbf{(+2.8)}}     \\
&$\ell_{\bm{\gamma},\mathsf{ce}}+\lambda\ell_{\bar s}$   & 70.9  & 84.4 {\scriptsize\textbf{(+1.1)}}   & 72.2 {\scriptsize \textbf{(+1.9)}}  & 56.7 {\scriptsize \textbf{(+3.7)}}     \\ \midrule
    \multirow{3}{*}{\rotatebox{90}{{PRN-20}}}
   & $\ell_\mathsf{ce}$        & 69.1 & 83.8 & 70.5 & 53.4      \\ 
   & $\ell_{\bm{\gamma},\mathsf{ce}}$      &  70.2 & 84.2  {\scriptsize \textbf{(+0.4)}} & 71.8  {\scriptsize \textbf{(+1.3)}} & 55.1   {\scriptsize \textbf{(+1.7)}} \\
   & $\ell_{\bm{\gamma},\mathsf{ce}}+\lambda\ell_{\bar s}$      & 71.3  & 84.7 {\scriptsize \textbf{(+0.9)}} & 73.2 {\scriptsize \textbf{(+2.7)}} &  56.6 {\scriptsize \textbf{(+3.2)}} \\ \midrule
  \multirow{3}{*}{\rotatebox{90}{{PRN-32}}}
 &$\ell_\mathsf{ce}$ & 71.2 & 85.0 & 71.7 & 56.7 \\
&$\ell_{\bm{\gamma},\mathsf{ce}}$ & 72.7 & 85.8  {\scriptsize \textbf{(+0.8)}} &  73.4 {\scriptsize \textbf{(+1.7)}}&  59.2  {\scriptsize \textbf{(+2.5)}}\\&$\ell_{\bm{\gamma},\mathsf{ce}}+\lambda\ell_{\bar s}$ & 73.3 & 86.1 {\scriptsize \textbf{(+1.1)}} & 73.9 {\scriptsize \textbf{(+2.2)}}&  60.1 {\scriptsize \textbf{(+3.4)}}\\
    \bottomrule
    \end{tabular}}
\end{subtable}
\hspace{1mm} 
\begin{subtable}[t]{.48\textwidth}\caption{\textbf{ImageNet}}
    \centering
    \scalebox{0.9}{
    \begin{tabular}{ll|cccb}
    \toprule
    \rowcolor{white}
    & Model        & Overall   &  Easy   & Medium & Hard        \\ 
    \midrule
  \multirow{3}{*}{\rotatebox{90}{{CRN-50}}}
     &$\ell_\mathsf{ce}$   & 75.2 & 91.1 & 78.3  & 56.4	 \\
    &$\ell_{\bm{\gamma},\mathsf{ce}}$    & 75.9 &  91.2 {\scriptsize\textbf{(+0.1)}}   & 78.6  {\scriptsize\textbf{(+0.3)}}  &   58.1  {\scriptsize\textbf{(+1.7)}}     \\&$\ell_{\bm{\gamma},\mathsf{ce}}+\lambda\ell_{\bar s}$    & 76.9 &  91.5 {\scriptsize \textbf{(+0.4)}}   &  79.7 {\scriptsize \textbf{(+1.4)}}  &  59.6 {\scriptsize \textbf{(+3.2)}}     \\ \midrule
      \multirow{3}{*}{\rotatebox{90}{{CViT-B}}}
     &$\ell_\mathsf{ce}$    & 75.6  & 	91.4 & 	78.4  &  57.0     \\ 
    &$\ell_{\bm{\gamma},\mathsf{ce}}$     &  76.3  & 91.5  {\scriptsize \textbf{(+0.1)}} & 78.7 {\scriptsize \textbf{(+0.3)}} & 58.9   {\scriptsize \textbf{(+1.9)}} \\&$\ell_{\bm{\gamma},\mathsf{ce}}+\lambda\ell_{\bar s}$     & 77.1  & 91.7 {\scriptsize \textbf{(+0.3)}} & 79.9  {\scriptsize \textbf{(+1.5)}} & 59.8   {\scriptsize \textbf{(+2.8)}} \\ \midrule
          \multirow{3}{*}{\rotatebox{90}{{MViT-B}}}
  & $\ell_\mathsf{ce}$ & 80.4 & 94.5  & 83.9 & 62.7 \\
  &$\ell_{\bm{\gamma},\mathsf{ce}}$ & 81.3  &  94.6  {\scriptsize\textbf{(+0.1)}} &  84.5  {\scriptsize\textbf{(+0.6)}}&    65.0 {\scriptsize\textbf{(+2.3)}}\\
  &$\ell_{\bm{\gamma},\mathsf{ce}}+\lambda\ell_{\bar s}$ & 82.0 &  94.8 {\scriptsize \textbf{(+0.3)}} &  85.3 {\scriptsize \textbf{(+1.4)}}&  66.1  {\scriptsize \textbf{(+3.4)}}\\
    \bottomrule
    \end{tabular}}
\end{subtable}
\label{tab:add-main-component}
\end{table}

\begin{table}[t]
\captionsetup{justification=centering}

\begin{minipage}{0.45\linewidth}
    \captionof{table}{Numerical values for Figure~\ref{fig:effect-c}.}\label{tab:bar-c}
    \centering
    \tabstyle{4pt}
    \begin{tabular}{lcccccc}
    \toprule
    $\bar{c}$&0    & 0.5    & 1   &  2   & 3  & 5        \\ 
    \midrule
    Hard & 60.4 & 60.8 & 61.6 & 61.9 & 61.2 &  61.0 \\
    Overall & 72.8 & 73.2 & 73.7 & 73.9 & 73.4 & 73.1 \\ \bottomrule
    \end{tabular}
\end{minipage}
\hfill
\begin{minipage}{0.53\linewidth}
     \captionof{table}{Numerical values for Figure~\ref{fig:effect-lambda}.}\label{tab:lambda}
    \centering
    \tabstyle{4pt}
    \begin{tabular}{lccccccc}
    \toprule
    $\lambda $& 0.01  &  0.05     &  0.1  &   0.3    & 0.5 &  0.7 & 0.9        \\ 
    \midrule
    Hard & 60.7 & 61.0 & 61.7 & 61.8 & 61.9 & 61.6 & 61.5 \\
    Overall & 73.1 & 73.5 & 73.9 & 73.9 & 73.9 & 73.8 & 73.7 \\ \bottomrule
    \end{tabular}
\end{minipage}

\end{table}

\begin{table}[t]
\caption{Per-class accuracies on CIFAR-10. $\Delta$ denotes the relative improvements.}
\label{tab:per-class}
\centering
\begin{tabular}{lcccccccccc}
\toprule
  & automobile & truck & ship	 & frog & horse & airplane & deer & dog & bird & cat  \\ \midrule
ERM   &    91.4      &    87.1        &   85.5   &  80.4   &  79.0	    &   76.5  &   76.1	   &   70.5	   &   70.1   &   62.5  \\ 
\ours~(ours) &  92.1        &     89.0       &  87.4   &   83.0  &   82.5   & 78.9 &  79.2    &   77.3    & 78.3     &  71.6    \\ 
        \rowcolor{lightCyan}
$\Delta$  &    +0.7      &     +1.9       &  +1.9    &+2.6&   +3.5   &   +2.4  &  +3.1    &  +6.8     &   +8.2   &  +9.1 \\ 
\bottomrule
\end{tabular}
\end{table}

\noindent\textbf{Effect of different norms on CIFAR-100 and ImageNet.} Table~\ref{tab:norm-cifar} and Table~\ref{tab:norm-imagenet}  report the results with various $L_p$ norms on CIFAR-100 and ImageNet. As expected, using $p \geq 1$ consistently improves both balanced accuracies and generalization, supporting our theoretical framework.

\noindent\textbf{Details numerical results for  StanfordCars, OxfordPets, Flowers, Food and FGVCAircraft.} In Table~\ref{tab:other-datasets}, we present all the metrics for five fine-grained datasets. We observe significant improvement on ``hard'' classes and moderate gains on ``easy'' classes, indicating the effectiveness of our \ours~in improving balanced accuracies and generalization.

\noindent\textbf{Additional main component analysis.} In Table~\ref{tab:add-main-component}, we present additional analysis on the effects of the logit margin loss $\ell_{\bm{\gamma},\mathsf{ce}}$ and the representation margin loss $\ell_{\bar{s}}$. The results show that both regularization components are effective in enhancing balanced accuracies and overall performance.

\noindent\textbf{Numerical results for hyper-parameters $\bar{c}$ and $\lambda$.} 
The detailed values are in Table~\ref{tab:bar-c} and Table~\ref{tab:lambda}.

\noindent\textbf{Per-class accuracies of our \ours.} As datasets such as CIFAR-100 and ImageNet involve a large number of classes, making tabulated reporting less clear, we use CIFAR-10 for clarity. Additionally, we train our models on 10\% of the CIFAR-10 training set to ensure meaningful class-wise variation, as training on the full CIFAR-10 dataset leads to near-saturated performance. The results in Table~\ref{tab:per-class} clearly indicate that our \ours~method improves performance most significantly for the harder-to-classify classes. For example, the two worst-performing classes, bird and cat, see improvements of 8.2\% and 9.1\% in accuracy, respectively. This aligns with the goal of reducing class-wise performance disparity.

\zbe{
\begin{figure}[t]
\captionsetup{justification=centering}
\centering
\begin{minipage}{0.4\linewidth}
    \centering
\includegraphics[width=1.0\textwidth]{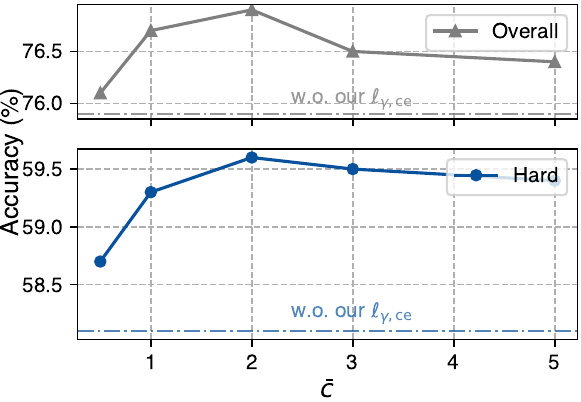}
    \captionof{figure}{Effect of $\bar{c}$.}
    \label{fig:effect-c-imagenet}
\end{minipage}
\hspace{1mm}
\begin{minipage}{0.4\linewidth}
    \centering
    \includegraphics[width=1.0\textwidth]{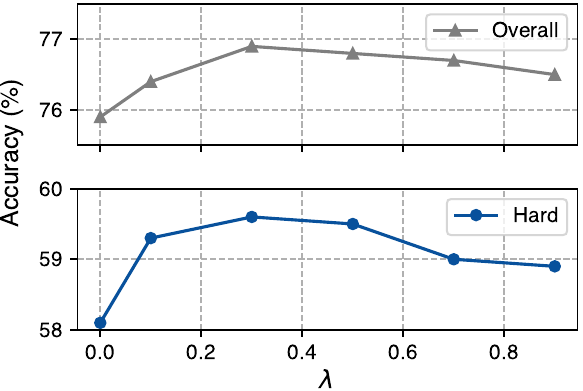}
    \captionof{figure}{Effect of $\lambda$.}
    \label{fig:effect-lambda-imagenet}
\end{minipage}
\end{figure}

\noindent\textbf{Effect of $\bar{c}$ on ImageNet.} In Figure~\ref{fig:effect-c-imagenet}, we study the impact of $\bar{c}$ using CLIP ResNet-50 on ImageNet. Both curves exhibit similar trends, with optima located around $\bar{c}=2$. 

\noindent\textbf{Effect of $\lambda$ on ImageNet.} In Figure~\ref{fig:effect-lambda-imagenet}, we examine the effect of $\lambda$ using CLIP ResNet-50 on ImageNet. \ours~achieves optimal overall and hard class performance around $\lambda=0.4$,  and the performance remains stable in this region.

\begin{table}[t]
\centering
\caption{Standard deviations across diverse model architectures on CIFAR-10 and ImageNet.}
\tabstyle{3pt}
\begin{subtable}[t]
{.48\textwidth}\caption{\textbf{CIFAR-100}}
    \centering
   \scalebox{0.9}{
    \begin{tabular}{l|cccc}
    \toprule
    \rowcolor{white}
    Model        & Hard    &  Medium   & Easy &   Overall      \\ 
    \midrule
    ResNet-20     & 0.0033	 &  0.0029	&  0.0019	 &  0.0035   \\
    \quad + \ours   & 0.0012	 &  0.0027	   &  0.0016	 &   0.0022   \\ \midrule
    PreAct RN-20         & 0.0016	&  0.0030	 & 0.0033	 & 0.0019     \\ 
    \quad + \ours       & 0.0027	 & 0.0039	 & 0.0015	& 0.0024 \\ \midrule
    ResNet-32 & 0.0029	 & 0.0036	& 0.0019	 & 0.0032 \\
    \quad + \ours & 0.0019	  & 0.0029	&0.0037	   & 0.0017 \\ \midrule
 PreAct RN-32 & 0.0026 & 	0.0016	 & 0.0034 & 	0.0039 \\ 
 \quad + \ours & 0.0030	 & 0.0012	 &0.0037	 & 0.0022 \\ \midrule
 W-RN-22-10 &0.0025	& 0.0038	& 0.0019	& 0.0034\\
 \quad + \ours & 0.0013	& 0.0027& 	0.0040	& 0.0022 \\
    \bottomrule
    \end{tabular}}
\end{subtable}
\begin{subtable}[t]{.48\textwidth}\caption{\textbf{ImageNet}}
    \centering
    \scalebox{0.9}{
    \begin{tabular}{l|cccc}
    \toprule
    \rowcolor{white}
    Model        & Hard    &  Medium   & Easy &   Overall      \\ 
    \midrule
    ResNet-50 TfS & 0.0023&	0.0039&	0.0015	& 0.0028 \\
    \quad + \ours & 0.0018 &	0.0029&	0.0032	& 0.0015 \\ \midrule
    MoCov2 RN-50 & 0.0015	 &  0.0013	 & 0.0024	& 0.0008 \\
    \quad + \ours~($\ell_{\bm{\gamma},\mathsf{ce}}$) & 0.0019	   &   0.0032	&0.0011	& 0.0027  \\ \midrule
    CLIP RN-50     & 0.0036	& 0.0018	 & 0.0039	 & 0.0015	 \\
    \quad + \ours    & 0.0029	 & 0.0035	  &  0.0012  &  	0.0023    \\ \midrule
    CLIP ViT-B/32    & 0.0019	  & 0.0017	 &  0.0020		 &   0.0028    \\ 
    \quad + \ours     & 0.0021	 &  0.0033	 & 0.0029	 &  0.0037\\ \midrule
 MAE ViT-B/16 & 0.0018 & 	0.0022	 & 0.0034& 	0.0017  \\
 \quad + \ours & 0.0026	 & 0.0038	  & 0.0023	&  0.0035 \\
    \bottomrule
    \end{tabular}}
\end{subtable}
\label{tab:std}
\end{table}

\noindent\textbf{Standard deviations.} In Table~\ref{tab:std}, we report the standard deviations across diverse model architectures on CIFAR-10 and ImageNet. 

\begin{figure}[h]
\centering
\includegraphics[width=1.0\textwidth]{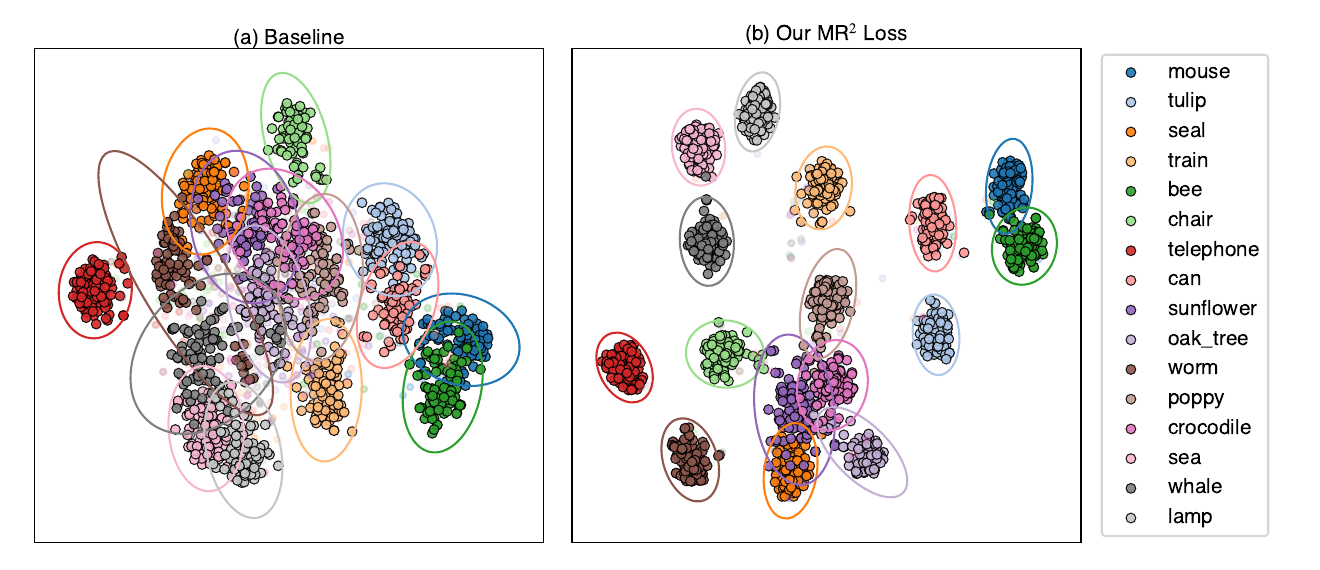}
\caption{t-SNE visualizations of feature representations for randomly selected hard CIFAR-100 classes using (a) standard cross-entropy and (b) our \ours.
}
\label{fig:tSNT}
\end{figure}

\noindent\textbf{Visualization of the features.} Figure~\ref{fig:tSNT} visualizes the effect of our loss on the feature geometry of hard CIFAR-100 classes. We randomly selected hard classes and extract their last-layer test features. These features are then projected to 2D using t-SNE~\cite{vandermaaten08a}. Under standard cross-entropy (Figure~\ref{fig:tSNT}a), the clusters of different classes are highly entangled and many hard classes exhibit large within-class spread. In contrast, with our \ours~loss (Figure~\ref{fig:tSNT}b), the features of each hard class become much more compact and different classes are better separated.

}
\section{Societal Impact and Limitation}\label{sec:impact}

Disparities in class-wise accuracy, even under class-balanced training, pose fairness risks in real-world deployments. Our work contributes toward building more trustworthy and fair machine learning systems. Although \ours~focuses on class-level disparity, it does not account for unfairness arising from data-related issues such as label noise and class co-occurrence. 

\section{LLM Usage Statement}
We used ChatGPT only for minor language editing to improve clarity and conciseness. No part of the research idea, methodology, or analysis was generated by LLMs.

\end{document}